\documentclass[12pt]{amsart}
\usepackage{times}
\DeclareMathAlphabet{\mathpzc}{OT1}{pzc}{m}{it}

\usepackage[T1]{fontenc}
\usepackage{dsfont}
\usepackage{mathrsfs}
\usepackage[colorlinks]{hyperref}
\usepackage{xcolor}
\usepackage[a4paper,asymmetric]{geometry}
\usepackage{mathscinet}
\usepackage{fullpage}
\usepackage{latexsym}
\usepackage{graphicx}
\usepackage{epstopdf}
\usepackage{amsthm}
\usepackage{amssymb}
\usepackage{amsfonts}
\usepackage{amsmath}
\usepackage{cite}
\usepackage{float}  
\usepackage{subfigure} 
\usepackage{caption} 
\usepackage[ruled,linesnumbered]{algorithm2e}
\usepackage{natbib}
\setcitestyle{authoryear,round}

\newtheorem{theorem}{Theorem}[section]
\newtheorem{thm}[theorem]{Theorem}

\newtheorem{lemma}[theorem]{Lemma}
\newtheorem{lem}[theorem]{Lemma}
\newtheorem{proposition}[theorem]{Proposition}

\newtheorem{corollary}[theorem]{Corollary}
\newtheorem{assumption}[theorem]{Assumption}
\theoremstyle{definition}

\newtheorem{defn}[theorem]{Definition}

\theoremstyle{remark}
\newtheorem{remark}[theorem]{Remark}
\newtheorem{rem}[theorem]{Remark}
\numberwithin{equation}{section}

\DeclareMathAlphabet{\mathpzc}{OT1}{pzc}{m}{it}

\newcommand{\dif}{\mathrm{d}}

\newcommand{\E}{\mathbb{E}}            

\newcommand{\e}{\varepsilon}

\newcommand{\N}{\mathbb{N}}
\newcommand{\R}{\mathbb{R}}
\newcommand{\Z}{\mathbb{Z}}
\newcommand{\PP}{\mathbb{P}}

\newcommand{\Be}{\begin{equation}}
	\newcommand{\Ee}{\end{equation}}
\newcommand{\Bes}{\begin{equation*}}
	\newcommand{\Ees}{\end{equation*}}
\newcommand{\Bey}{\begin{eqnarray}}
	\newcommand{\Eey}{\end{eqnarray}}
\newcommand{\Beys}{\begin{eqnarray*}}
	\newcommand{\Eeys}{\end{eqnarray*}}
\newcommand{\BT}{\begin{thm}}
	\newcommand{\ET}{\end{thm}}
\newcommand{\Bp}{\begin{proof}}
	\newcommand{\Ep}{\end{proof}}
\newcommand{\BL}{\begin{lem}}
	\newcommand{\EL}{\end{lem}}
\newcommand{\BP}{\begin{proposition}}
	\newcommand{\EP}{\end{proposition}}
\newcommand{\BC}{\begin{corollary}}
	\newcommand{\EC}{\end{corollary}}
\newcommand{\BR}{\begin{rem}}
	\newcommand{\ER}{\end{rem}}
\newcommand{\BD}{\begin{defn}}
	\newcommand{\ED}{\end{defn}}
\newcommand{\BI}{\begin{itemize}}
	\newcommand{\EI}{\end{itemize}}

\usepackage{marginnote}
\marginparwidth60pt 

\pdfoptionpdfminorversion=6

\begin{document}

	\title[ ]
{Distribution estimation and change-point estimation for time series via DNN-based GANs}

\author[J. Lu]{Jianya Lu}
\address[J. Lu]{Department of Mathematical Sciences, University of Essex}
\email{jianya.lu@essex.ac.uk}

\author[Y. Mo]{Yingjun Mo}
\address[Y.~Mo]{Department of Mathematics, Faculty of Science and Technology, University of Macau, Macau S.A.R., China }
\email{yc27477@um.edu.mo}

\author[Z. Xiao]{Zhijie Xiao}
\address[Z. Xiao]{Department of Economics, Boston College}
\email{zhijie.xiao@bc.edu}

\author[L.~Xu]{Lihu Xu}
\address[L.~Xu]{Department of Mathematics, Faculty of Science and Technology, University of Macau, Macau S.A.R., China }
\email{lihuxu@um.edu.mo}

\author[Q.  Yao]{Qiuran Yao}
\address[Q.~Yao]{Department of Mathematics, Faculty of Science and Technology, University of Macau, Macau S.A.R., China }
\email{yb97478@um.edu.mo}
\maketitle
\begin{abstract}
The generative adversarial networks (GANs) have recently been applied to estimating the distribution of independent and identically distributed data, and have attracted a lot of research attention. In this paper, we use the blocking technique to demonstrate the effectiveness of GANs for estimating the distribution of stationary time series. Theoretically, we derive a non-asymptotic error bound for the Deep Neural Network (DNN)-based GANs estimator for the stationary distribution of the time series. Based on our theoretical analysis, we propose an algorithm for estimating the change point in time series distribution. The two main results are verified by two Monte Carlo experiments respectively, one is to estimate the joint stationary distribution of $5$-tuple samples of a 20 dimensional AR(3) model, the other is about estimating the change point at the combination of two different stationary time series. A real world empirical application to the human activity recognition dataset highlights the potential of the proposed methods.\\ \par\
	{\bf Key words:\ time series; blocking technique; s-dependence form; GAN; Wasserstein distance; nonasymptotic
		error bounds; change-point estimation\rm}
\end{abstract}
\section{Introduction}
Let $\{\boldsymbol X_n\}_{n\in\N}$ be a stationary time series with the stationary measure $\pi$ {  satisfying appropriate regularity assumptions}, see the details in Section \ref{sec: Assumptions}. We aim in this paper to investigate the validity of DNN-based generative adversarial networks (GANs) for estimating $\pi^l$, the joint stationary distribution of $l$-tuple samples, of this time series, and propose an algorithm based on our theoretical result to estimate the change-point in time series.

Estimation of distribution plays an important role in data analysis. Many traditional methods {     on distributional estimation} are based on nonparametric kernel  {methods}, and suffer from the curse of dimensionality. In recent years, machine learning algorithms such as the GANs emerged as important techniques in distribution estimation and {     have demonstrated} excellent performance in high dimensional problems. 

GANs are a class of deep learning methods designed  to estimate the distribution of data [\citet{goodfellow2014generative}]. The approach is usually implemented by training two neural networks with conflicting objectives, one generator ($\boldsymbol {g}$), and one discriminator (d), forcing each other to improve. The generator is designed to produce {     an estimator that approximates the sample distribution}, while the discriminator  measures the distance between generated samples and the real data.

In the research of GANs, extensive theoretical and empirical analysis have been established for independent and identically distributed  (i.i.d.) data, however limited
work has exploited them for estimating the joint distribution of time-series data, which {  is important in many applications}. In addition, the evaluation of GANs remains a largely-unsolved problem, researchers often rely on visual evaluation of generated examples, an approach that is both impractical and inappropriate for multi-dimensional time series. 
There have been {     a few} results related to the GANs for time series, but most of them only showed the successes in experiments without a rigorous justification, see more details in the literature review below. 

Our first main result contributes to the theory of the GAN method  for estimating the joint distribution of stationary time-dependent time series. We establish the convergence rate of the proposed estimator measured by integral probability metric under the H\"older evaluation function class. {     We consider the joint distribution of a "block" or a "group" of samples and develop approximation error bounds for the DNN-based GANs. This result is very important.  In practice, a wide range of time series procedures are constructed based on  "block"s or "group"s  of time series observations. For example, blockwise bootstrap methods use blocks of time series observations to capture the dependence structure in the original data. Our result has great potential in statistical applications in time series analysis.} Our Theorem \ref{thm:main} below provides a theoretical   justification for the application of GANs in time series under appropriate regularity conditions.    

{     An important issue in statistical applications is the potential structural change in the underlying stationary distribution in a time series. For example, } there might have been abrupt variations in time series data. {     For this reason,} the estimation of  change-points gains popularity in modeling and prediction of time series, for example, in finance, biology, engineering, etc. Most of the traditional change-point estimation methods conduct the hypothesis test for the density function to monitor the change point [\citet{harchaoui2009regularized}]. However, these methods may have low estimation efficiency when encountering high-dimensional time series data. Some change-point estimation methods for high-dimensional {     multivariate} time series indirectly use the model selection or testing {     based on} the parametric models [\citet{cho2015multiple}, \citet{chen2015graph}]. The algorithms of parametric models are hard to extend to the massive data case due to the computational cost and {     in particular,} some strong assumptions [\citet{chen2015graph}]. Motivated by our first main result which confirms that GANs can directly learn the distribution of high-dimensional complex time series data, we propose a simple algorithm without any {     parametric} constraint to estimate the change point directly by tracking the variations in the distribution of the time series based on GAN.

\subsection{Literature review}

The research of the GANs architecture for sequential data has received more and more attention. In recent years, many methods of modeling time series with GANs have been proposed, such as continuous recurrent neural networks with adversarial training (C-RNN-GAN) [\citet{mogren2016c}], Recurrent Conditional GAN (RCGAN) [\citet{esteban2017real}], time series-GAN [\citet{yoon2019time}], Conditional Sig-Wasserstein GANs (SigCWGAN) [\citet{ni2020conditional}]. However, they only showed the success of the GAN-based methods for realistic sequence generation in experiments but there is no theory to support the validity of this model. We refer to the survey [\citet{brophy2021generative}] for more details about GANs in time series. 

Compared with the vast number of theoretical results for neural networks used to approximate functions, the use of neural networks to express distributions is much less theoretically understood. Some literature focus on how well GAN and its variants can express probability distributions. \citet{lu2020universal} studied the universal approximation property of deep neural networks for representing probability distributions, \citet{zhu2020deconstructing} studied a fundamental trade-off in the approximation error and statistical error in GANs, \citet{huang2022error} studied the convergence rates of GAN estimators under the integral probability metric with the H\"older function class.  \citet{liang2021well} established the optimal minimax rates for distribution estimation under GANs in view of  nonparametric density estimation. \citet{zhang2017discrimination} explored the  generalization capacity of GANs and extended the generalization bounds for neural net distance. 

For the theoretical analysis of GANs, many  scholars consider the probability distribution estimating of i.i.d. data by GANs. \citet{gao2022approximation} estimated the error bound of the approximation of  Wasserstein GANs using GroupSort neural networks as discriminators.  \citet{bai2018approximability} used GANs with some special discriminators to learn distributions  of i.i.d. samples in Wasserstein distance and KL-divergence. \citet{liu2021non} introduced a nearly sharp bound for the bidirectional GAN estimation error under the Dudley distance. We refer the reader to [\citet{taghvaei20192,chae2021likelihood}] for more details. Although a lot of work on the theoretical basis of GAN has appeared in academia, there seems very few of theoretical analysis for generating time-dependent data. For the estimation of joint distribution using GANs. \cite{srinivasan2022time} estimate the joint distribution and use it to generate synthetic time series data. \cite{pu2018jointgan} learn the joint distribution of multiple random variables from the data that samples from conditional distribution and marginal distribution. \citet{yoon2019time} estimate the joint distribution via a learned embedding space with both supervised losses. However, there seems few theoretical result for joint distribution estimation using GANs.

{ The traditional high-dimensional change-point estimation is around two aspects: model selection and testing. \citet{tibshirani2008spatial}, \citet{li2016panel}, \citet{harchaoui2010multiple} utilized the Lasso-type regularization model to estimate the change points and improve the computational efficiency. For the aspect of testing, \citet{harchaoui2009regularized} used a regularized kernel-based test statistic to test the consistency of  distributions between time-series samples. \citet{enikeeva2019high} considered the hypothesis
	test to test the change point in a set of high-dimensional Gaussian vectors. \citet{chen2015graph} proposed a nonparametric approach based on the graphs to test the similarity between observations.  The aggregation technique testing whether there exists a gap between two time regions is also a popular way for change-point analysis. }
For example, \citet{jirak2015uniform} showed asymptotic properties by using the coordinate-wise statistic to test the single change point. { \citet{bai2010common} theoretically analyzed the consistency of the single-change point in panel data.} \citet{chen2021inference} improved the aggregation way of the data and explored to test the multiple change-points in high-dimensional time series. For the change-point estimation of complex high-dimensional data,
\citet{romanenkova2022indid} considered the change-point estimation of the complex sequential stream, video data, by representation learning for the deep model. \citet{chang2019kernel} combined the kernel two-sample test with recurrent neural networks (RNNs) to estimate the different types of change-points. They introduced an auxiliary generative model to generate an approximate distribution and use the maximum mean discrepancy to conduct the two-sample test, but there is no information about the location of the change-point. Although there are various change-point estimation techniques, fewer works directly focus on tracking the distribution of time series. 

\subsection{Our contributions}
Our main contributions are summarized as the following two aspects.
The first contribution is theoretically confirming that GANs are effective for estimating $\pi^l$, the joint distribution of $l$-tuple samples, of a stationary time series. More precisely, 
for the stationary time series whose dependence {  satisfies appropriate regularity assumptions}, we show that DNN-based GANs can be applied to learn its joint distribution, i.e., like the i.i.d. samples, the time series can be fed into the device of the neural networks of the GANs to obtain a stable generator $\hat{\boldsymbol {g}}$ such that $\hat{\boldsymbol {g}}(\boldsymbol z)$ has a distribution very close to $\pi^l$ ($\boldsymbol z$ is a Gaussian random variable). We prove a non-asymptotic error bound between the estimator of the measure $\pi^l$ and itself, which clearly demonstrates the estimator is consistent and thus effective. Our proof of this theoretical result is based on a typical block technique,  see {  [\citet{berkes2014komlos, liu2009strong, gouezel2010almost, lu2022amost}].}


By the validity of GANs for estimating the joint distribution of stationary times series, we put forward an algorithm based on GANs for estimating the position of the change point at the combination of two different stationary time series. {The idea is straightforward. Since the observed data on both sides of the change point follow two different stationary distributions, there should exist a significant difference between their DNN-based GAN estimators. Our second main result is to develop an algorithm which can efficiently capture this difference and thus obtain an estimation of the change point}. 

We implement two Monte Carlo experiments to validate our main results.
In the first experiment, for validating our theoretical results, we train a GAN model to learn a set of stationary sequences from the multivariate autoregressive  model (AR(3)) by rolling 5-sized window, and we compute the correlations and autocorrelations of the  generated 5-tuple samples. The results show that GAN can generate a set of strongly auto-correlated time series patches, and the generated sample patches have similar correlations as the real data. The second experiment demonstrates an example of our proposed single change-point estimation algorithm at the combination of two different stationary time series, in which the change point can be accurately estimated. We also apply the proposed algorithm to a real world empirical application about the human activity recognition dataset to estimate the volunteer's change of activities in the third experiment.

\subsection{Organization of the paper and some notations}
The paper is organized as follows. In Section \ref{sec2}, we provide some preliminary knowledge and assumptions for our main result. {  In Section \ref{sec3}, we give the convergence rate of GANs estimator for time series and the algorithm to estimate the change-point in a time series sequence. Three examples of simulation and real data analysis are given in Section \ref{sec4}. The proof of our main result is deferred to Section \ref{sec5}. }

We finish this section by introducing some notations which will be frequently used in
sequel.	Denote $[n]:=\{1, \ldots, n\}$. Given a vector $\mathbf{w} \in \mathbb{R}^d,\|\mathbf{w}\|$ will refer to the Euclidean norm, and for $h \geqslant 1,\|\mathbf{w}\|_h=(\sum_{i=1}^d|\mathbf{w}_i|^h)^{1 / h}$ will refer to the $\ell_h$ norm, $\|\mathbf w\|_\infty$ refer to the largest elements of $\mathbf w$. Given a matrix $\mathbf W\in \mathbb{R}^{p}\times \mathbb{R}^l$ and reals $h,l \geqslant 1$, we let $\|\mathbf W\|_{h,l}:=(\sum_k(\sum_j|\mathbf W_{j, k}|^h)^{l / h})^{1 / l}$ denote the $l$-norm of the $h$-norms of the columns of $\mathbf W$. The following defines the vectorization of the matrix
$${vec}\left(\left(\mathbf W_{i j}\right)_{1 \leq j \leq l}^{1 \leq i \leq p}\right)=\left(\mathbf W_{11}, \mathbf W_{21}, \ldots, \mathbf W_{p 1}, \mathbf W_{12}, \ldots, \mathbf W_{p 2}, \ldots, \mathbf W_{1 l}, \ldots, \mathbf W_{p l}\right)^{\top}\in \mathbb{R}^{lp}$$
For a real number $x\in\R$, we denote the integer part of $x$ by $\lfloor x\rfloor$ and the smallest integer number greater than $x$ by $\lceil x\rceil$.

For a real-valued function $f: \mathcal{X} \rightarrow \mathbb{R}$, we define $\|f\|_{\infty}:= \sup _{\boldsymbol{x} \in \mathcal{X}}|f(\boldsymbol{x})|$. 
Let $\operatorname{Lip}(\Omega, c)$ be all the real-valued Lipschitz functions from $\Omega$ to $\mathbb{R}$ with Lipschitz constant $c$, that is,
\begin{eqnarray*}
	\operatorname{Lip}(\Omega, c):=\left\{f: \Omega \rightarrow \mathbb{R}:\left|f\left(\boldsymbol{z}_1\right)-f\left(\boldsymbol{z}_2\right)\right| \leq c\left\|\boldsymbol{z}_1-\boldsymbol{z}_2\right\| \text { for any } \boldsymbol{z}_1, \boldsymbol{z}_2 \in \Omega\right\}.
\end{eqnarray*}

For $b>0$ with $b=r+q$, where $r\in\N_0:=\N\cup\{0\}$, $q\in(0,1]$, denote the H\"older class with smoothness index $b$ as 
\begin{eqnarray*}
	\mathcal H^b(\R^d):=\big\{f:\R^d\to\R,\max_{\|\alpha\|_1\le r} \|\partial^\alpha f\|_\infty\le1, \max_{\|\alpha\|_1= r}\sup_{\boldsymbol x\neq \boldsymbol y}\frac{|\partial^\alpha f(\boldsymbol x)-\partial^\alpha f(\boldsymbol y)|}{\|\boldsymbol x-\boldsymbol y\|^q}\le1   \big\},
\end{eqnarray*}
and we denote $\mathcal H=\mathcal H^b(\R^d)$ for the simplicity. For any subset $\mathcal{X} \subseteq \mathbb{R}^d$, we denote $\mathcal{H}^b(\mathcal{X}):=\left\{f: \mathcal{X} \rightarrow \mathbb{R}, f \in \mathcal{H}^b\left(\mathbb{R}^d\right)\right\}$.

Let $\mu$ and $\nu$ be a pair of probability measures on a space $\mathcal{X}$, and $\mathcal{F}$ be a class of functions $f: \mathcal{X} \rightarrow \mathbb{R}$ that are integrable with respect to  $\mu$ and $\nu$. Define integral probability metric as
\begin{eqnarray*}
	d_{\mathcal{F}}(\mu,\nu ):=\sup _{f \in \mathcal{F}}\left|\int f(\mathrm{d}\mu -\mathrm{d}\nu )\right|=\sup _{f \in \mathcal{F}}\left|\mathbb{E}_{X\sim \mu}[f(X)]-\mathbb{E}_{Z \sim \nu}[f(Z)]\right|.
\end{eqnarray*}

Wasserstein distance between two probability measures $\mu$ and $\nu$ is defined as
\begin{eqnarray*}
	\mathcal W_1(\mu, \nu):=\inf _{\gamma \in \Gamma(\mu, \nu)} \int_{\mathcal{X} \times \mathcal{X}} \|x-y\| \mathrm{~d} \gamma(x, y),
\end{eqnarray*}
where $\Gamma(\mu, \nu)$ denotes the collection of all measures on $\mathcal{X} \times \mathcal{X}$ with marginals $\mu$ and $\nu$ on the first and second factors respectively. 

By the duality theorem of Kantorovich and Rubinstein (1958), we have 
\begin{eqnarray*}
	\mathcal W_1(\mu, \nu)=d_{\text {Lip}(\mathcal{X},1)}(\mu,\nu )=\sup _{f \in \text {Lip}(\mathcal{X},1)} \left\{\int_\mathcal{X} f(x) \mathrm{d}(\mu-\nu)(x) \mid \text {continuous } f: \mathcal{X} \rightarrow \mathbb{R}\right\}.
\end{eqnarray*}

\section{Preliminaries}\label{sec2}
In this section, we first give the preliminaries about the assumptions on the stationary time series we want to study, then introduce the method of DNN-based generative adversarial networks (GANs) to estimate the joint distribution of this time series, and give the assumptions on the architectures of DNNs that we will use to model the generator and discriminator. 

\subsection{Time Series Assumptions}\label{sec: Assumptions}
\citet{wu2005nonlinear} proposed predictive dependence measures to provide another look at the fundamental issue of dependence. Using this tool, he established a theory for high-dimensional inferences under dependence. It is different from most of the current research on a high-dimensional inference that assumes the underlying observations are independent. The related concepts are as follows.

The $p$-dimensional stationary time series in \citet{wu2005nonlinear} is in the form of
\begin{eqnarray}\label{time series}
	\boldsymbol X_n :=\boldsymbol G(...,\boldsymbol \e_{n-1},\boldsymbol \e_n),
\end{eqnarray} 
where $\boldsymbol \e_n,n\in \Z$ are i.i.d. random variables and $\boldsymbol G$ is a measurable function such
that $\boldsymbol X_n$ is well-defined. 

Let $\left(\boldsymbol \varepsilon_i^{\prime}\right)$ be an i.i.d copy of $\left(\boldsymbol \varepsilon_i\right)$, $\boldsymbol \xi_i=\left(\ldots, \boldsymbol \varepsilon_{i-1}, \boldsymbol \varepsilon_i\right)$ and $\boldsymbol \xi_i^{\prime}=(\ldots, \boldsymbol \varepsilon_{i-1}^{\prime}, \boldsymbol \varepsilon_i^{\prime})$. For any set $I \subset \mathbb{Z}$, $\boldsymbol \varepsilon_{j, I}=\boldsymbol \varepsilon_j^{\prime}$ if $j \in I$ and $\boldsymbol \varepsilon_{j, I}=\boldsymbol \varepsilon_j$ if $j \notin I$; $\boldsymbol \xi_{i, I}=\left(\ldots,\boldsymbol  \varepsilon_{i-1, I},\boldsymbol  \varepsilon_{i, I}\right)$. 

The predictive dependence measure is defined as
\begin{eqnarray}
	[\mathbb{E}(\|\mathbb{E}(\boldsymbol X_n \mid\boldsymbol  \xi_0)-\mathbb{E}(\boldsymbol X_n \mid\boldsymbol  \xi_{0, I})\|^\beta)]^{1 /\beta},
\end{eqnarray} 
where $n\geq 0$, $\beta \geq 1$. 

Under the assumption that $\{\boldsymbol X_n\}_{n\in\N}$ satisfies  geometric moment contraction, i.e.
\begin{eqnarray}
	\E\|\boldsymbol X_n-\boldsymbol G(\boldsymbol \xi_0',\boldsymbol \e_1,...,\boldsymbol \e_n)\|^\beta\le C_1e^{-C_2n},
\end{eqnarray}
\citet{wu2007strong} extended KMT  approximation from independent random variables to a large class of dependent stationary processes. \citet{berkes2014komlos} extended this result to optimal rate.

The time series studied in this paper satisfies the following assumption:

\begin{assumption}\label{assum1}
	We assume that the time series $\{\boldsymbol X_n\}_{n\in\N}$ in the form of \eqref{time series} satisfies geometric moment contraction, that is,
	\begin{eqnarray}\label{e:GMC}
		\E\|\boldsymbol X_n-\boldsymbol G(\boldsymbol \xi_0',\boldsymbol \e_1,...,\boldsymbol \e_n)\|^\beta\le C_1e^{-C_2n},
	\end{eqnarray}
	where $C_1$ and $C_2$ are positive constants, $n\geq 0$, $\beta \geq 1$, $\boldsymbol \xi_0=(...,\boldsymbol \e_{-1},\boldsymbol \e_0)$, $\boldsymbol \xi_0'=(...,\boldsymbol \e_{-1}',\boldsymbol \e_0')$ and $\{\boldsymbol \e_n'\}_{n\in\Z}$ is an i.i.d. copy of $\{\boldsymbol \e_n\}_{n\in\Z}$.
\end{assumption}
We further assume that the time series satisfies either Assumption \ref{assum2} or Assumption \ref{assum4}. 
\begin{assumption}\label{assum2}
	The stationary measure $\pi$ of time series $\{\boldsymbol X_n\}_{n\in\N}$ is sub-Gaussian, that is, there exist $\boldsymbol m\in\R^p$ and $v>0$ such that, for all $\boldsymbol \alpha\in\R^p$, 
	\begin{eqnarray*}
		\E_{\boldsymbol x\sim\pi}[\exp\{\boldsymbol \alpha^T(\boldsymbol x-\boldsymbol m)\}]\le e^{\|\boldsymbol \alpha\|^2v^2/2},	
	\end{eqnarray*}
	where $T$ is the transpose operator. Without loss of generality, we assume $\boldsymbol{m}=\boldsymbol{0}$.
\end{assumption}

{
\begin{assumption}\label{assum4}
	The stationary measure $\pi$ of time series $\{\boldsymbol X_n\}_{n\in\N}$ has $\omega$-th moment bound with $\omega\ge2$, that is,
	\begin{eqnarray*}
		\E_{\boldsymbol x\sim\pi}[\|\boldsymbol x\|^\omega]\le C_\omega.	
	\end{eqnarray*}
\end{assumption}	
}

\subsection{GAN for Estimating Joint Distribution of Time Series}
Suppose that data of the time series $\boldsymbol X_n :=\boldsymbol G(...,\boldsymbol \e_{n-1},\boldsymbol \e_n)$ in the form of \eqref{time series} for all periods in the sample is available. In this subsection, we introduce the method to estimate the joint distribution of this stationary time series with length $l$ using GANs (we theoretically prove the effectiveness of this approach in Section 3). Specifically, we divide the data of time series into multiple small blocks, each containing data for $l$ time points, then design a method of DNN-based GANs to estimate the joint distribution using the data from all blocks. The detailed idea is as follows.

We adopt the chunking method for time series in rolling-window analysis, which makes the most efficient use of samples and is useful for subsequent experimental analysis. We set the size of a rolling window, i.e., the number of consecutive observation per rolling window as $l$, and the number of increments between successive rolling windows as 1 period, then partition the entire data set of time series into $\bar n=n-l+1$ subsamples. Hence, the $i$ th block of time series in this rolling time window can be written as
$$
\{\boldsymbol X_{i}, \boldsymbol X_{i+2},...,\boldsymbol X_{i+(l-1)}\},1\le i\le \bar n
$$
We denote $vec(\{\boldsymbol X_{i}, \boldsymbol X_{i+1},...,\boldsymbol X_{i+(l-1)}\})$ as $\boldsymbol X_i^\prime$, and the joint distribution of $\boldsymbol X_i^\prime$ as $\pi^l$. Obviously, the stationary measure $\pi$ of time series is just $\pi^1$. Here $\pi^l$ can be a high-dimensional distribution, which is difficult to estimate with traditional  methods. 

Let $\mathcal G$ and $\mathcal D$ be the collections of neural networks. Let $\boldsymbol g\in \mathcal G$ be the generator function from $\R$ to $\R^{lp}$ and $d\in\mathcal D$ be the discriminator function from $\R^{lp}$ to $\R$. GANs for time series can be formulated as the following optimal problem,
\begin{eqnarray}\label{e:GANs}
	\boldsymbol g^{*} \in \arg \min _{\boldsymbol g \in \mathcal{G}} \max _{d \in \mathcal{D}}\left\{\E_{\boldsymbol x \sim \pi^l}[d( \boldsymbol x)]-\E_{\boldsymbol z \sim \nu}[d(\boldsymbol g(\boldsymbol z))]\right\},
\end{eqnarray}
where $\nu$ is the Gaussian distribution defined on the one-dimensional space $\R$ to generate fake signal. For convenience, $\mathcal{D}$ is assumed in symmetric class, which means if $d \in \mathcal{D}$, then $-d \in \mathcal{D}$. For any subset $\mathcal{X} \subseteq \mathbb{R}^{lp}$, we denote $\mathcal{D}(\mathcal{X}):=\left\{f: \mathcal{X} \rightarrow \mathbb{R}, f \in \mathcal{D}\left(\mathbb{R}^{lp}\right)\right\}$. The empirical version of \eqref{e:GANs} is considered as
\begin{eqnarray}\label{e:epGANs}
	\boldsymbol {\hat g} \in \arg \min _{\boldsymbol g \in \mathcal{G}} \max _{d \in \mathcal{D}}\Big\{ \frac1{\bar n}\sum_{i=1}^{\bar n}d(\boldsymbol X_i^\prime)-\frac1m\sum_{i=1}^m d(\boldsymbol g(\boldsymbol z_i))\Big\},
\end{eqnarray}
where $\bar n=n-l+1$, $\{\boldsymbol z_i\}_{i=1,...,m}$ are i.i.d. random variables with distribution $\nu$. 

\subsection{DNN Model Assumptions}
We consider the feed-forward DNN as the L-compositional function class indexed by parameter 
\begin{eqnarray}\label{e:DNNpara}
	\boldsymbol \Theta=(\boldsymbol W_0,\boldsymbol W_1,...,\boldsymbol W_L),
\end{eqnarray}
where $\boldsymbol W_l\in\R^{D_{l+1}\times D_l}$, for $l=0,1,...,L$, $D_0,...,D_{L+1} \in\N$. $L$ and $N:=\max\{D_1,...,D_L\}$ are depth and width of the DNNs respectively. 

We denote the DNN with an input $\boldsymbol x \in \R^d$ by
\begin{eqnarray}\label{e:DNN}
	\mathcal {NN}_{d}(N,L):=\{f(\boldsymbol x;\boldsymbol \Theta)=\boldsymbol W_L\sigma_L(... \boldsymbol W_1\sigma_1(\boldsymbol W_0\boldsymbol x)...)\in \mathbb{R}^{D_{L+1}}|\boldsymbol \Theta\}.
\end{eqnarray}
When the input dimension $d$ is clear from contexts, we simply denote it by $\mathcal{NN}(N,L)$. 

We consider that the neural networks $\mathcal{N N}(N, L)$ in \eqref{e:DNN} has ReLU activation functions 
$$\left\{\sigma_j(\boldsymbol{z})=\left(z_1 \vee 0, \cdots, z_{D_j} \vee 0\right) \text{ for } \boldsymbol{z}:=\left(z_1, \ldots, z_{D_j}\right)^{T} \in \mathbb{R}^{D_j}\right\}_{j=1}^L.$$ 
\begin{assumption}\label{assum3}
	We further denote $\mathcal{NN}(N,L,K)$ by the function of $\mathcal{NN}(N,L)$ with 
	$$
	\prod_{l=0}^{L}\|\boldsymbol W_l\|\le K,
	$$
	where  $K>0$, $\|\cdot\|$ is the operator norm of a matrix.
\end{assumption}

\begin{remark}
	{  Since ReLu function is a Lipschitz function with Lipschitz constant $1$, it is easy to see that for any function $\boldsymbol f\in \mathcal{NN}(N,L,K)$, its Lipschitz constant is smaller than $\prod_{l=0}^{L}\|\boldsymbol W_l\|$ which is bounded by $K$, that is,
		$$||\boldsymbol f(\boldsymbol x)-\boldsymbol f(\boldsymbol y)||\le K||\boldsymbol x-\boldsymbol y||.$$
		For more details, see \citet[Section 3]{jiao2022approximation}.
	}
	
\end{remark}

\section{Main Results}\label{sec3}
{  In this section, two main results are introduced, the first result is a a non-asymptotic error bound between the joint measure $\pi^l$ of the time series and its estimator $\boldsymbol {\hat g}_{\#}\nu$ which is defined below. Based on the  theoretical result, the second main result for estimating the single change-point in time series sequence is also introduced.
	\subsection{Non-asymptotic bound for the estimator}
Recall that $\pi^l$ is the joint distribution of each small block with length $l$ of the time series $\boldsymbol X_n :=\boldsymbol G(...,\boldsymbol \e_{n-1},\boldsymbol \e_n)$ in the form of \eqref{time series}. Our goal is to establish a non-asymptotic error bound of the GANs estimator for joint measure $\pi^l$ based on integral probability metric, 
	\begin{eqnarray*}
		d_{\mathcal H}(\pi^l, \boldsymbol {\hat g}_{\#}\nu)=\sup_{h\in\mathcal H}\big\{\E_{\boldsymbol x \sim \pi^l}[h(\boldsymbol x)]-\E_{\boldsymbol z \sim \nu}[h(\boldsymbol {\hat g}(\boldsymbol z))]\big\},
	\end{eqnarray*}
	where $\mathcal H=\mathcal H(\R^{lp})$, $\boldsymbol {\hat g}_\#\nu$ is the push-forward measure defined as $\boldsymbol {\hat g}_\#\nu(A)=\nu(\boldsymbol {\hat g}^{-1}(A))$ for measurable set $A\subset \R^{lp}$. For ease of notation, we use $\mathbb{E}_\gamma[f]$ instead of $\mathbb{E}_{\boldsymbol{x} \sim \gamma}[f(\boldsymbol{x})]$ for any distribution $\gamma$ and function $f$ below.

The following theorem is our first main result, whose proof will be postponed to the last section.
\begin{theorem}[Non-asymptotic Bound]\label{thm:main}
	Let Assumptions \ref{assum1} and \ref{assum3} hold true, and let either Assumption \ref{assum2} or Assumption \ref{assum4} be satisfied. Let  $\nu$ be a probability measure on $\mathbb{R}$ which is absolutely continuous with respect to the Lebesgue measure. Recall \eqref{e:epGANs} and assume that 
$\E |\boldsymbol {z}_1|<\infty$. Let the evaluation class be $\mathcal{H}=\mathcal{H}^b\left(\mathbb{R}^{lp}\right)$ with $b=r+q,r \in \mathbb{N}_0$ and $q \in(0,1]$. 
	
	Then, for the given sample size $n$, there exists a generator family
	$$
	\mathcal{G}=\left\{\boldsymbol g \in \mathcal{N} \mathcal{N}\left(N_{\mathcal{G}}, L_{\mathcal{G}},K_{\mathcal{G}}\right): \boldsymbol g(\mathbb{R}) \subseteq[-a_n,a_n]^{lp}\right\},
	$$ 
	with $N_{\mathcal{G}} \geq 7 lp+1, L_{\mathcal{G}} \geq 2$ and $a_n$ to be determined later, and a discriminator family
	$$
	\mathcal{D}=\{d\in\mathcal{N} \mathcal{N}\left(N_{\mathcal{D}}, L_{\mathcal{D}}, K_{\mathcal{D}}\right): d(\mathbb{R}^{lp}) \subseteq[-B,B]\}\cap \operatorname{Lip}\left(\mathbb{R}^{lp}, C\right), 
	$$
	with some $B>0$ and $C>0$, $N_{\mathcal{D}} \geq c\left(K_{\mathcal{D}} / \log ^\gamma K_{\mathcal{D}}\right)^{(2 lp+b) /(2 lp+2)}, L_{\mathcal{D}} \geq 4 \gamma+2$, $\gamma:=\left\lceil\log _2(lp+r)\right\rceil$,  such that
	$$
	n\leq \frac{N_{\mathcal{G}}-lp-1}{2}\left\lfloor\frac{N_{\mathcal{G}}-lp-1}{6l p}\right\rfloor\left\lfloor\frac{L_{\mathcal{G}}}{2}\right\rfloor+1+l ,
	$$
	such that GAN estimator $\boldsymbol{\hat{g}}$, defined by \eqref{e:epGANs}, satisfies the following results.
	
	{     
	(i). As Assumption \ref{assum2} holds and $a_n=\log (lpn)$, i.e.,
	$$
	\mathcal{G}=\left\{\boldsymbol g \in \mathcal{N} \mathcal{N}\left(N_{\mathcal{G}}, L_{\mathcal{G}},K_{\mathcal{G}}\right): \boldsymbol g(\mathbb{R}) \subseteq[-\log (lpn),\log (lpn)]^{lp}\right\},
	$$ 
	\begin{eqnarray*}
		d_{\mathcal H}(\pi^l, \boldsymbol {\hat g}_{\#}\nu):&=&\sup_{h\in\mathcal H}\big\{\E_{\boldsymbol x \sim \pi^l}[h(\boldsymbol x)]-\E_{\boldsymbol z \sim \nu}[h(\boldsymbol {\hat g}(\boldsymbol z))]\big\}\\
		&\le& O\left((\log lpn)^b({  K_{\mathcal{D}}/\log^\gamma K_{\mathcal{D}}})^{-\frac{b}{lp+1}}\right)+O\left(lK_{\mathcal{D}} e^{-C_2 n^{\alpha}l}\right)+O\left(B\sqrt{{lt_1}/{n^{1-\alpha}}}\right)\\
		&&+O\left(l\sqrt{{1}/{n}} K_{\mathcal{D}} \sqrt{L_{\mathcal{D}}+\log lp}\right)+O\left(K_{\mathcal{D}} K_{\mathcal{G}} \sqrt{L_{\mathcal{D}}+L_{\mathcal{G}}}/{m} +\sqrt{{t_2}/{m}}\right),
	\end{eqnarray*}	
	with probability at least $1-2l{e}^{-t_1}-le^{-C_2n^{\alpha}l/2\beta}-e^{-t_2}$, for $t_1,t_2>0$, $0< \alpha< 1/2$.

	(ii). As Assumption \ref{assum4} holds and $a_n=n^{\frac{1-\alpha}{2\omega}}$, i.e.,
	$$
	\mathcal{G}=\left\{\boldsymbol g \in \mathcal{N} \mathcal{N}\left(N_{\mathcal{G}}, L_{\mathcal{G}},K_{\mathcal{G}}\right): \boldsymbol g(\mathbb{R}) \subseteq[-n^{\frac{1-\alpha}{2\omega}},n^{\frac{1-\alpha}{2\omega}}]^{lp}\right\},
	$$  
		\begin{eqnarray*}
			d_{\mathcal H}({\pi^l}, \boldsymbol {\hat g}_{\#}\nu)   
			&\le& O\left(n^{\frac{(1-\alpha)}{2\omega}}({  K_{\mathcal{D}}/\log^\gamma K_{\mathcal{D}}})^{-\frac{b}{lp+1}}\right)+O\left(lK_{\mathcal{D}} e^{-C_2 n^{\alpha}l}\right)+O\left(Bl\sqrt{{t_1}/{n^{1-\alpha}}}\right)\\
			&&+O\left(l\sqrt{{1}/{n}} K_{\mathcal{D}} \sqrt{L_{\mathcal{D}}+\log lp}\right)+O\left(K_{\mathcal{D}} K_{\mathcal{G}} \sqrt{L_{\mathcal{D}}+L_{\mathcal{G}}}/{m} +\sqrt{{t_2}/{m}}\right).
		\end{eqnarray*}
with probability at least $1-2l{e}^{-t_1}-le^{-C_2sl/2\beta}-e^{-t_2}$, for $t_1,t_2>0$, $0< \alpha< 1/2$.
		$1-2l{e}^{-t_1}-le^{-C_2sl/2\beta}-e^{-t_2}$
}
	
\end{theorem}
{  
	\begin{remark}
		The error of estimating the marginal distribution with i.i.d. sample data by GANs is given as $n^{-b/p}\vee n^{-1/2}\log^{c(b,p)}n$, where $c(b,p)$ is a positive constant depending on the index $b$ of H\"older class and the dimension $p$, see [\citet{huang2022error}]. Considering the time series case with Assumption \ref{assum2}, we further assume that $K_{\mathcal D}=C n^{\frac{1}{2}-\frac{b}{2(p+1+b)}}$, $m\ge Cn K_{\mathcal G}\sqrt L_{\mathcal G} $ and $B\le (\log pn)^b$, for $l=1$ and large enough sample size $n$, we can get 
		\begin{eqnarray*}
			d_{\mathcal H}(\pi^1, \boldsymbol {\hat g}_{\#}\nu)
			&\le& O\left((\log pn)^b n^{-\frac{b}{2(p+1+b)}}\right)+O\left(Bn^{-\frac12(1-\alpha)}\right),
		\end{eqnarray*}	
		with high probability. Here $\pi^1$ is the stationary distribution $\pi$ of time series, $\alpha\in(0,\frac12)$ is the parameter in the blocking technique to handle the dependence issue of time series. For any $p$ and $b$, one can choose $\alpha<\frac{p+1}{p+1+b}$ such that
		\begin{eqnarray*}
			d_{\mathcal H}(\pi, \boldsymbol {\hat g}_{\#}\nu)
			&\le& O\left((\log pn)^b n^{-\frac{b}{2(p+1+b)}}\right),
		\end{eqnarray*}	
		which is close to the error of of i.i.d. case as $p\ge 2b$, i.e. $n^{-b/p}$.
		
		{      Considering the time series case with Assumption \ref{assum4}, we further assume that $K_{\mathcal D}=C n^{\frac{(p+1)(2-2\alpha+\omega)}{2\omega(p+1+b)}}$, $m\ge Cn K_{\mathcal G}\sqrt L_{\mathcal G} $ and $B\le (\log pn)^b$, $\omega>p$ for $l=1$ and large enough sample size $n$, we can get 
		\begin{eqnarray*}
			d_{\mathcal H}(\pi^1, \boldsymbol {\hat g}_{\#}\nu)
			&\le& O\left((\log pn)^b n^{-\frac{b}{2(p+1+b)}+\frac{(1-\alpha)(p+b)}{2\omega(p+1+b)}}\right)+O\left(Bn^{-\frac12(1-\alpha)}\right),
		\end{eqnarray*}	
		with high probability. Here $\pi^1$ is the stationary distribution $\pi$ of time series, $\alpha\in(0,\frac12)$ is the parameter in the blocking technique to handle the dependence issue of time series. For any $p$ and $b$, one can choose $\alpha<\frac{\omega p+\omega+p+b}{\omega(p+1+b)+p+b}$ such that
		\begin{eqnarray*}
			d_{\mathcal H}(\pi, \boldsymbol {\hat g}_{\#}\nu)
			&\le& O\left((\log pn)^b n^{-\frac{b}{2(p+1+b)}+\frac{p+b}{2\omega(p+1+b)}}\right).
		\end{eqnarray*}	
	
	}
		
	\end{remark} 
}	

\subsection{Change-point estimation for time series}\label{sec:change-point}
The result of Theorem \ref{thm:main} has a {     wide} range of potential applications. In this subsection, we apply it to estimate change point in time series distributions. {     In the following analysis, we will focus our discussion on the case of a structural change in the marginal distribution, corresponding to $l=1$ in Theorem \ref{thm:main}. The procedure and associated analysis can be easily extended to the estimation of change point in the joint distributions by considering $l>1$.}

Let $\boldsymbol X = \{\boldsymbol X_{1},...,\boldsymbol X_{n}\}$ be a $p$-dimensional time series sequence {     with a structural break in the distribution at time $\tau$. Thus,} there exists a $\tau \in \mathbb N$ such that $\{\boldsymbol X_{1},...,\boldsymbol X_{\tau}\}$ is stationary with a stationary distribution $\pi$ and that $\{\boldsymbol X_{\tau+1},...,\boldsymbol X_{n}\}$ is also stationary but with a distribution $\pi'$. This form of time series is called the time series with a single change point [\citet{chen2015graph}]. 
Estimating the change point $\tau$ is an important research topic in time series {     applications in statistics and econometrics}. 

{     Based on the theoretical result in the previous subsection, we  propose an algorithm for estimating the single change point in time series using GANs, see Algorithm \ref{alg:the_alg} below. It is reasonable to assume that the change point $\tau$ is bounded away from the beginning or the end, combining with the following $K$-block estimation procedure,  we need to assume $K<\tau<n-K$. Similar assumptions are also required in the literature of change point estimations.}

The {     procedure} for estimating $\tau$ contains three stages. In the \textbf{first} stage, we divide the time series $\boldsymbol X$ into $\lceil n/K\rceil$ blocks, each having $K (1 \ll K \ll n)$ samples, as the following: 
$\boldsymbol{B}_1,...,\boldsymbol{B}_{\lceil n/K \rceil}$, where $\boldsymbol{B}_i=\{\boldsymbol X_{(i-1)K+1},...,\boldsymbol X_{iK}\}$  for each $i$. Then, we train a stable GAN model by the sample points in the first block $\boldsymbol B_1$, i.e., we obtain a generator $\boldsymbol {g}$ and a discriminator $d$. At the \textbf{second} stage, we use the samples in the remaining blocks $\{\boldsymbol B_{2},...,\boldsymbol B_{\lceil n/K \rceil}\}$ to compute the loss values, i.e., for every $\boldsymbol{B}_i$, we compute
\begin{equation}\label{eq:lossvalue}
\mathcal{L}_i=\frac{1}{K} \sum_{\boldsymbol X_j \in \mathbf{B}_{i}} d(\boldsymbol X_j)-\frac{1}{K}\sum_{j=1}^{K} d(\boldsymbol {g}(\boldsymbol z_j)), \quad \text{for} \quad i=2,...,\lceil n/K \rceil,
\end{equation}
where $\boldsymbol z_1,...,\boldsymbol z_K$ are i.i.d. standard normal distributed random variables, (here, if the sample size $h$ of the last block $\boldsymbol B_{\lceil n/K \rceil}$ is less than $K$, we use $h$ to substitute $K$ in (\ref{eq:lossvalue})).  
Obviously, for the blocks that contain the samples following the same distribution as the ones used to train GAN, the loss values would be much smaller than the ones containing the samples with different distributions. The $\mathcal{L}_k$ of the block in which the change point is located will be very different from that of its neighboring blocks. {  Motivated by these properties}, we can compute the difference, $|\mathcal{L}_{i+1}-\mathcal{L}_i|$, between the neighboring two blocks and find some $\boldsymbol{B}_{k}$ and $\boldsymbol{B}_{k+1}$ that have the largest difference on the loss values. After determining the two candidate blocks $\boldsymbol{B}_{k}$, $\boldsymbol{B}_{k+1}$, we merge them into one group, $\boldsymbol{G}=\{\boldsymbol{B}_k,\boldsymbol{B}_{k+1}\}=\{\boldsymbol X_{(k-1)K+1},...,\boldsymbol X_{(k+1)K}\}$. This operation can help us narrow down the time series data set and further facilitate the estimation of change point. In the \textbf{third} stage, for each sample $\boldsymbol X_j \in \boldsymbol G$ for $j=(k-1)K+1,...,(k+1)K$, we build $\boldsymbol H_j=\{\boldsymbol X_{j-w},...,\boldsymbol X_j,...\boldsymbol X_{j+w}\}$ centered on the sample $\boldsymbol X_j$  and with radius $w(1\ll w\ll K)$ to do the estimation for $(k-1)K+1+w\le j\le (k+1)K-w$. To guarantee $\boldsymbol H_j$ is well-defined, we define that $\boldsymbol H_j=\{\boldsymbol X_{(k-1)K+1},...,\boldsymbol X_j,...\boldsymbol X_{j+w}\}$ if $j <(k-1)K+1+w$ and $\boldsymbol H_j=\{\boldsymbol X_{j-w},...,\boldsymbol X_j,...\boldsymbol X_{(k+1)K}\}$ if $j >(k+1)K-w$. All the samples in $\boldsymbol H_j$ are used to compute 
$$
	\dot{\mathcal{L}}_j=\frac{1}{2w+1} \sum_{\boldsymbol X_i\in \boldsymbol H_{j}} d(\boldsymbol X_i)-\frac{1}{2w+1}\sum_{i=1}^{2w+1}d(\boldsymbol g(\boldsymbol z_i)) \quad \text{for} \quad  j=(k-1)K+1,...,(k+1)K.
$$

where $\boldsymbol z_1,...,\boldsymbol z_{2w+1}$ are i.i.d. standard normal distributed random variables, (here, if the sample size $\kappa$ of $\boldsymbol H_{j}$  is less than $2w+1$, we use $\kappa$ to substitute $2w+1$).
At the change point (e.g., $\boldsymbol X_{j^*}$), there exists an obvious difference between the loss of two adjacent windows, $\dot{\mathcal{L}}_{j^*}$ and $\dot{\mathcal{L}}_{j^*-1}$,  because the window $\boldsymbol H_{j^*}$ contains more samples following different distribution than $\boldsymbol H_{j^*-1}$. Thus, we compute the difference $\dot{\mathcal{L}}_{j}-\dot{\mathcal{L}}_{j-1}$ for $j=(k-1)K+2,...,(k+1)K$ between two adjacent loss values to locate the change point $\boldsymbol X_{j^*}$ where the index satisfies
$$j^* =\arg\max [\dot{\mathcal{L}}_{j}-\dot{\mathcal{L}}_{j-1},j=(k-1)K+2,...,(k+1)K].
$$

\begin{algorithm}[]
	\caption{Change-point estimation}
	\label{alg:the_alg}
	\KwIn{ $n$: sample size; $K$: the number of samples in each block;  $w$: is the radius of the rolling window; }
	Sample a $p$-dimensional time series sequence $\boldsymbol{X}=\{\boldsymbol X_{1},...,\boldsymbol X_{n}\}$;\\
	Divide $\boldsymbol X$ into $\lceil n/K\rceil$ blocks, $\boldsymbol{B}_1,...,\boldsymbol{B}_{\lceil n/K\rceil}$ with $\boldsymbol{B}_i=\{\boldsymbol X_{(i-1)K+1},...,\boldsymbol X_{iK}\}$, for each $i$.\\
	$\textbf{Stage 1: Training a stable discriminator and generator}$\\
	\hspace*{-0.15in}\quad  \textbf{Initialize} $\quad \theta_d, \theta_g \leftarrow$ Gaussian; \\
	Use the samples in $\boldsymbol B_1$ to
	train a discriminator $d$;\\
	Sample i.i.d noise $\{\boldsymbol z_1,...,\boldsymbol z_{K}\}$ from the standard normal distribution to train a generator $\boldsymbol g$;\\
	\textbf{Stage 2: Select the blocks possibly including the change point}\\
	\For{$i = 2,...,\lceil n/K \rceil$}{
		{ Input $\boldsymbol{X}_j \in \mathbf{B}_{i}$ into $d$};\\
		Sample $K$ i.i.d noises $\{\boldsymbol z_{1},...,\boldsymbol z_{K}\}$ from the the standard normal distribution;\\
		{ Input $\{\boldsymbol z_{1},...,\boldsymbol z_{K}\}$ into $ \boldsymbol g$;} \\
		{ Compute $\mathcal{L}_i=\frac{1}{K} \sum_{\boldsymbol X_j \in \mathbf{B}_{i}} d(\boldsymbol X_j)-\frac{1}{K}\sum_{j=1}^{K} d(g(\boldsymbol z_j))$;}\\
		{Store $\mathcal{L}_i$. }\\
		
	}
	Compute the distances $|\mathcal{L}_{i+1}-\mathcal{L}_i|$ for $i=2,..., \lceil n/K \rceil-1$, and find the $\boldsymbol{B}_{k}$ and $\boldsymbol{B}_{k+1}$ blocks having the largest distance. Then merge $\boldsymbol{B}_{k}$ and $\boldsymbol{B}_{k+1}$ into one group $\boldsymbol{G}=\{\boldsymbol{B}_k,\boldsymbol{B}_{k+1}\}=\{\boldsymbol X_{(k-1)K+1},...,\boldsymbol X_{(k+1)K}\}$.  \\
	
	$\textbf{Stage 3: Estimate the location of change point}$\\
	 Build $\boldsymbol H_j=\{\boldsymbol X_{j-w},...,\boldsymbol X_j,...\boldsymbol X_{j+w}\}$ for $(k-1)K+1+w\le j\le (k+1)K-w$, $\boldsymbol H_j=\{\boldsymbol X_{(k-1)K+1},...,\boldsymbol X_j,...\boldsymbol X_{j+w}\}$ for $j <(k-1)K+1+w$ and $\boldsymbol H_j=\{\boldsymbol X_{j-w},...,\boldsymbol X_j,...\boldsymbol X_{(k+1)K}\}$ for $j >(k+1)K-w$. \\
	\For{$j=(k-1)K+1+w\le j\le (k+1)K-w$}{
		{Input samples $\boldsymbol H_j$ in $\boldsymbol G$ into $d$;}\\
		Sample $2w+1$ i.i.d noises $\{\boldsymbol z_{1},...,\boldsymbol z_{2w+1}\}$ from the standard normal distribution;\\
		{ Input $\{\boldsymbol z_{1},...,\boldsymbol z_{2w+1}\}$ into $g$;} \\
		{ Compute $\dot{\mathcal{L}}_j=\frac{1}{2w+1} \sum_{\boldsymbol X_i\in \boldsymbol H_{j}} d(\boldsymbol X_i)-\frac{1}{2w+1}\sum_{i=1}^{2w+1}d(\boldsymbol g(\boldsymbol z_i))$;}\\
		{Store $\dot{\mathcal{L}}_j$. }
	}
	Compute the distances $\dot{\mathcal{L}}_{j}-\dot{\mathcal{L}}_{j-1}$ for $j=(k-1)K+2+w\le j\le (k+1)K-w$, and record the $j^{*}$th index having the largest distance.
\end{algorithm}

\section{Monte Carlo and Empirical Applications}\label{sec4}
\subsection{Estimating the joint distribution of a multivariate autoregressive series}\label{sec:experiment1}

In this section, we illustrate the effectiveness of GANs for modeling the stationary time series data { and for estimating the joint distribution} from the multivariate autoregressive model as follows:
\begin{equation}\label{eq:AR(3)}
	\boldsymbol{x}_t = \boldsymbol{x}_{t-1}-0.01\boldsymbol{x}_{t-2}-0.5\boldsymbol{x}_{t-3}+\boldsymbol{v}_t,
\end{equation}
where $\boldsymbol{x}_t\in \mathbb{R}^{20}$ and $\boldsymbol{v}_t\sim \boldsymbol{N}(\boldsymbol{0},0.2\boldsymbol{I}_{20})$ for {  $t=1,...,10500$}. The initial values are set as $\boldsymbol{x}_1=\boldsymbol{x}_2=\boldsymbol{0}$, and {  each element of $\boldsymbol{x}_3$ is independently sampled from uniform distribution $U(0,0.01)$}, the seed is set 3 (Python language, $\mathsf{random.seed}(3)$). Considering that the sequences at initial times are nonstationary, we remove the first 500 observations to guarantee the stationarity of the training data and retain the remaining 10000 observations to train GAN. { We adopt the general training method of GAN (Algorithm 1 in \citet{arjovsky2017wasserstein}) to train the discriminator and generator. As we describe in Section \ref{sec:change-point}, the GAN model is trained by rolling window on the time series. We set the size of window to be 5, i.e., 5 time slices $\{\boldsymbol{x}_t,...,\boldsymbol{x}_{t+4}\}$ constituting a sample with size $5\times20$, fed into the discriminator.} The neural network of the discriminator for GAN is composed of nine-linear layers with the LeakyReLu activation function. The structure of the generator contains two convolution layers and five deconvolution layers, in which each layer is also followed by a LeakyReLu activation function. Specifically, the two convolutional layers first transform the Gaussian input noises into latent variables, and then the subsequent deconvolution layers upsample the latent variables to produce fake samples with the same dimension as the real ones. In the training process, we train the generator one time once the discriminator is trained 5 times. The batch size is set as 128. After training a stable generator, we feed a set of random Gaussian noises into the generator and then obtain a 5-tuple 20-dimensional generated samples. Figure \ref{fig:plot1} { shows the  correlations of the generated samples of 5-tuple generated samples (Figure \ref{fig:plot1a}) and their autocorrelations  (Figure \ref{fig:plot1b}). It can be seen that the correlations of sample patches generated by GAN are very close to that of the real data. We randomly select two dimensions (the second and fifth dimensions) in the generated sample patches and plot the autocorrelations. We notice that the generated sample patches have strong autocorrelations. The results reveal that GAN can effectively estimate the joint distribution of stationary time series data.}
\begin{figure}[] 
	\centering 
	\subfigure[]{
		\includegraphics[width=0.8\linewidth]{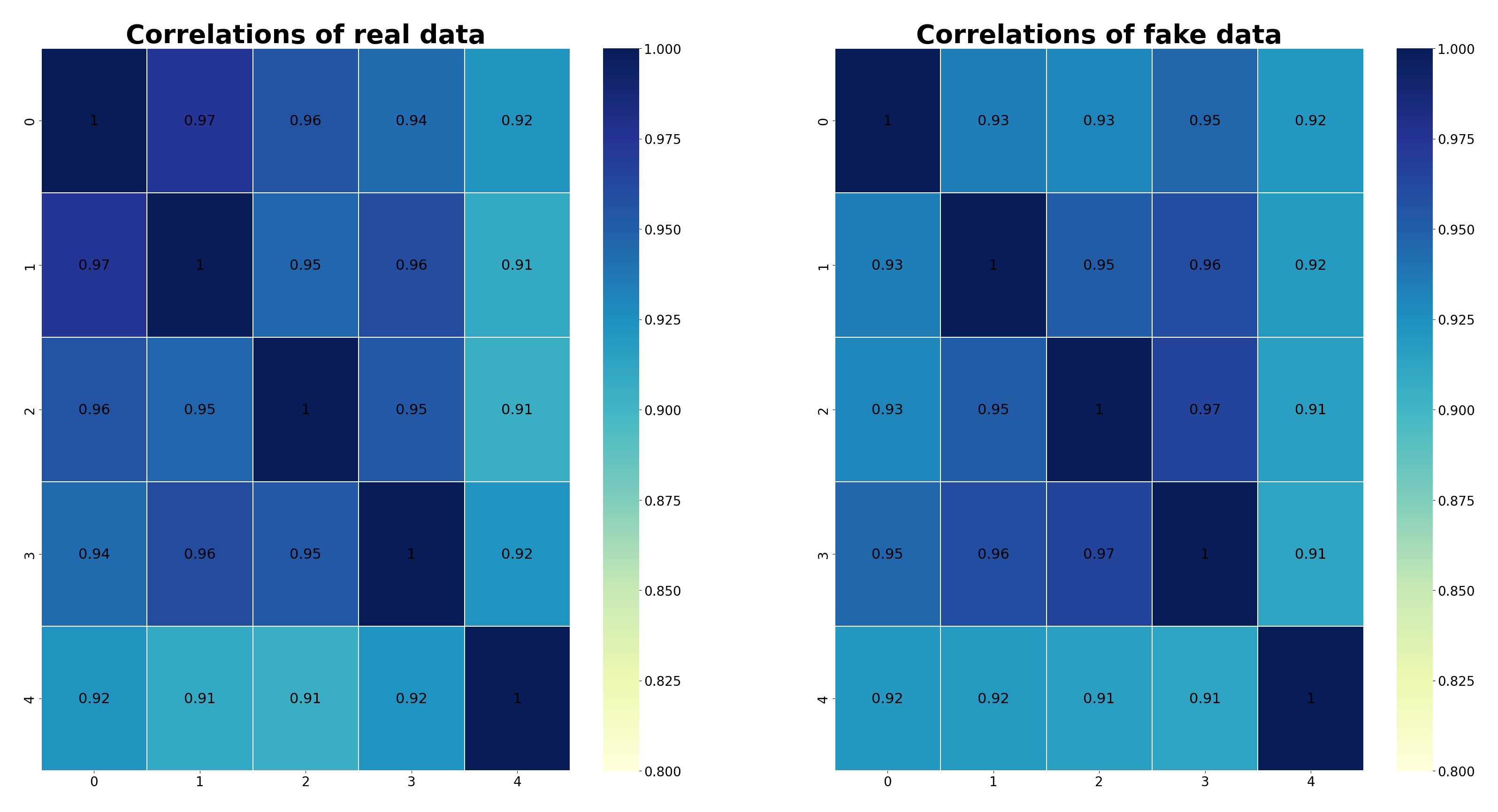}\label{fig:plot1a} }
	\subfigure[]{
		\includegraphics[width=0.45\linewidth]{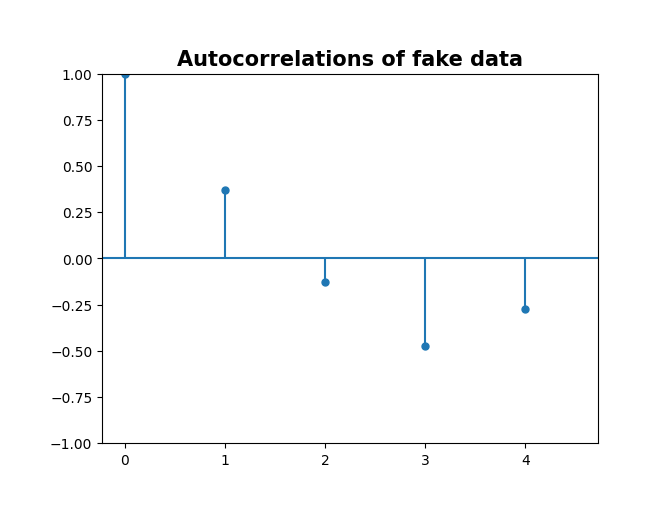} 
	\includegraphics[width=0.45\linewidth]{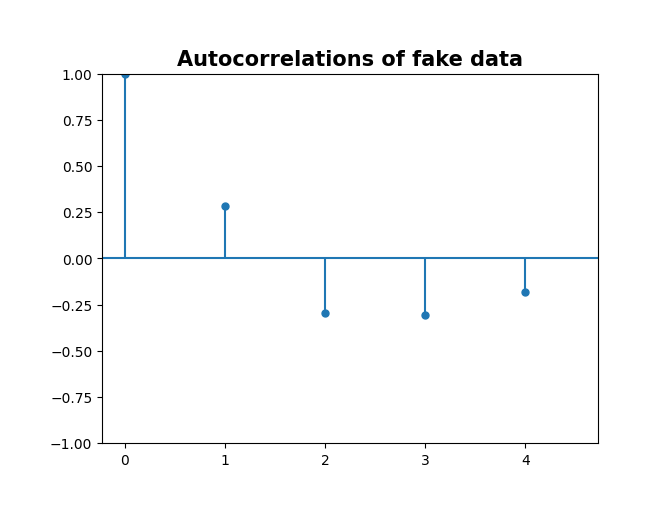} \label{fig:plot1b}}
	\caption{(a): The comparison of the correlations. (b): The autocorrelations of the second and fifth dimensions of the 5-tuple generated samples.}
	\label{fig:plot1}
\end{figure}
\begin{figure}[!t] 
	\centering 
	
	\includegraphics[width=0.6\linewidth]{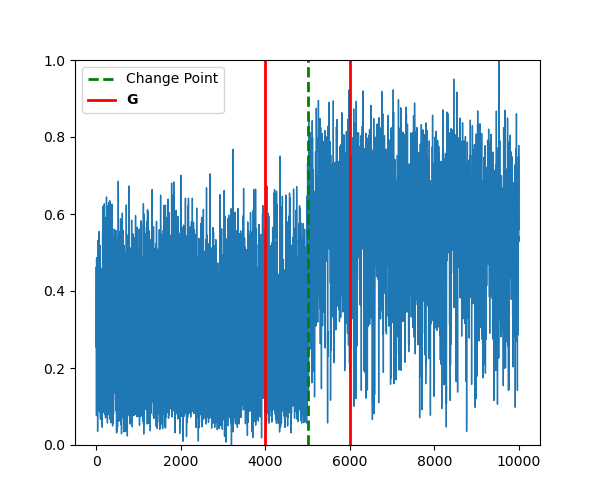}
	
	\caption{Loss values of 10000 stationary samples $\{\Tilde{\boldsymbol{y}}_t\}_{t=1}^{10000}$. The samples located in the interval $[4000,6000]$ are set as the elements of $\boldsymbol G$ (in \textbf{Stage 2}).}
	\label{Total}
\end{figure}
 
\begin{figure}[!t] 
	\centering 
	
	\includegraphics[width=0.8\linewidth]{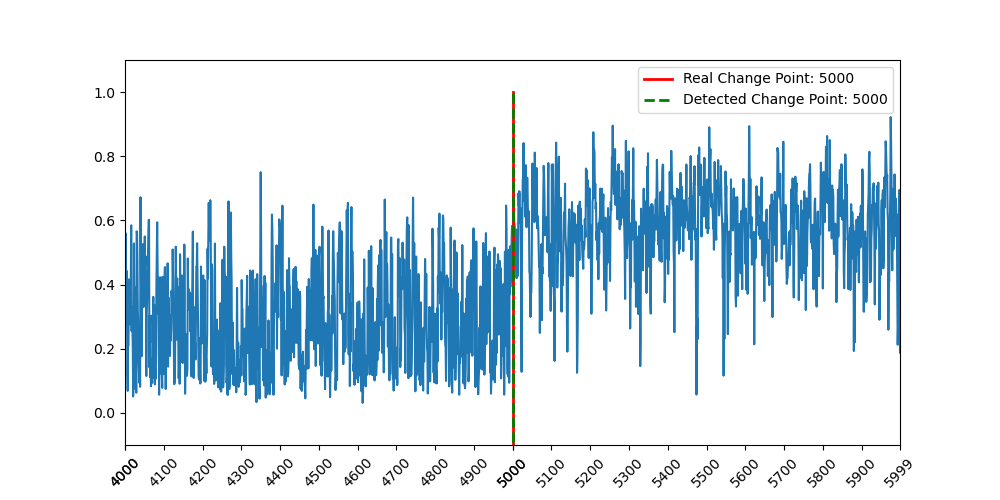}
	
	\caption{Loss values of each samples in $\boldsymbol G$ corresponding to the interval $[4000,6000]$ in Figure 
		\ref{Total}. The change point is accurately estimated.}
	\label{fig:plot2}
\end{figure}
\subsection{An example of the Algorithm \ref{alg:the_alg}}
We now present an example of change-point estimation for time series so as to illustrate the usefulness of our proposed estimation algorithm. {  Specifically, we generate a sequence of 23-dimensional 10500 time-series samples $\{\boldsymbol{y}_k\}_{k=1}^{10500}$ with a single change point, where $\boldsymbol y_k$ is generated by 
	$$
	\boldsymbol{y}_k=0.6\boldsymbol{y}_{k-1}-0.5\boldsymbol{y}_{k-2}+\boldsymbol{\epsilon}_k,
	$$
	and $\boldsymbol{\epsilon}_k$ is a Gaussian vector from $\boldsymbol{N}(\boldsymbol{0},\boldsymbol{I}_{23})$ for $k<5500$, $\boldsymbol{N}(\boldsymbol{2},\boldsymbol{I}_{23})$ for $k\ge 5500$. The initial values $\boldsymbol{y}_1=\boldsymbol{0}$ and $\boldsymbol{y}_2=\boldsymbol{0}$. We remove the first 500 non-stationary observations to obtain a sequence of stationary time series $\{\Tilde{\boldsymbol{y}}_t\}_{t=1}^{10000}$. So the change point exists at $t= 5000$ in time series $\{\Tilde{\boldsymbol{y}}_t\}_{t=1}^{10000}$.} We use our proposed change-point estimation algorithm to estimate the change point. Firstly, we split the 10000 samples into 10 blocks, each block having 1000 samples. Then we utilize the $1000$ samples in the first block to train a stable generator and discriminator (in \textbf{Stage 1}). Next, following the \textbf{stage 2}, we compute the loss values of the remaining blocks based on the trained generator and discriminator and find the two blocks having the largest loss values, i.e., the $5$th and $6$th blocks (in the time interval $[4000,6000]$), see Figure \ref{Total}. It can be seen that the loss values have obvious variation on the both sides of the change point. We merge the $5$th and $6$th blocks into group $\boldsymbol G$ and set the radius as $w=29$ (in \textbf{Stage 3}) to finally search the change point on $\boldsymbol G$. As shown in Figure \ref{fig:plot2}, the change point in $\boldsymbol G$ can be accurately estimated.
\subsection{Real data applications}
In previous Monte Carlo experiments, we demonstrated the outperformance of GAN on estimating the change point in the simulated data based on the Algorithm \ref{alg:the_alg}.  In this section, we apply our proposed algorithm to a real-world dataset, the human activity recognition (HAR) dataset (\citet{anguita2013public}). The HAR dataset contains the activity recognition data of 30 volunteers over times (sec) from the accelerometer, gyroscope, magnetometer and GPS sensors when they carry a waist-mounted smartphone. The recognition of  activity includes six different activities, such as sitting, walking, driving, etc.  We only use the gyroscope data to analyze. The gyroscope data records 3-axial angular velocity at a constant rate of 50Hz, which can be regarded as three dimensions. We extract 10800 observations of one volunteer in the gyroscope data to estimate the change of activities, i.e., the variations of the volunteer's activity recognition from inactive to walking. Figure \ref{fig:realline} shows the extracted time series on three dimensions. It can be seen that there exists an obvious change between the two activities at the sides of the 9468th observation. The observations before the 9468th sample indicate that the state of this volunteer is inactive. For the observations after 9468th sample, their distribution is significantly distinct with the former ones, which represents that the state of activity changes to walking.   

Next, we use our proposed algorithm to estimate the location of this change point. Here, we set the size of the rolling window as 5 to reduce the influence of abnormal fluctuation of the observations after the 9468th sample on the loss values and enhance the estimate accuracy. We split the 10800 observations into 5 blocks, then, we use the same training methods in the first experiments (section \ref{sec:experiment1}) to learn the joint distribution of the samples in the first block by rolling 5-sized windows and obtain a stable discriminator and generator (\textbf{Stage 1}). Following the \textbf{Stage 2}, we compute the loss values of the remaining blocks and find the two blocks corresponding to the largest difference in the loss values, see Figure \ref{fig:plot3} (a). Then, we search the change point in the interval $[6480,10799]$ with the radius $w=4$. Figure \ref{fig:plot3} (b) shows that the estimated location of the change point is the same as the real one. Therefore, we can conclude that our proposed change-point estimation  algorithm can accurately estimate the volunteer's activity change.
\begin{figure}[] 
	\centering 
		\includegraphics[width=0.85\linewidth]{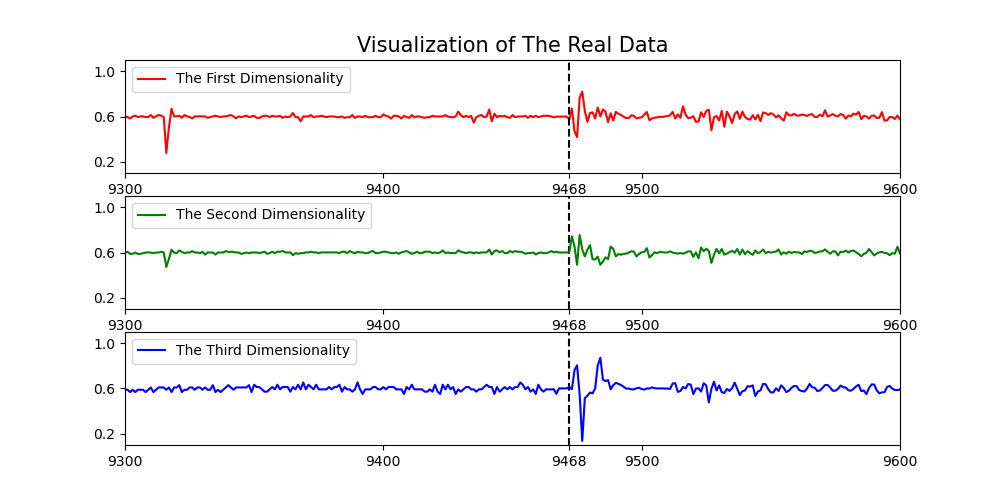} 
	\caption{Visualization of the gyroscope data near the change point.}
	\label{fig:realline}
	
\end{figure}
\begin{figure}[] 
	\centering 
	\subfigure[]{
		\includegraphics[width=0.95\linewidth]{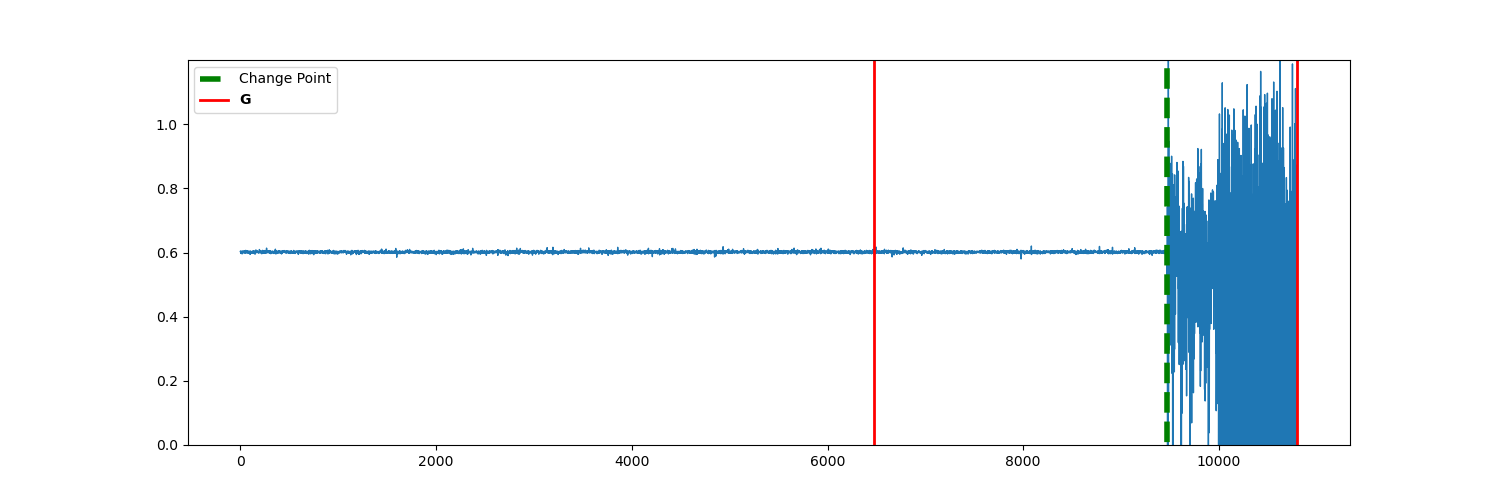} }
	\subfigure[]{
		\includegraphics[width=0.95\linewidth]{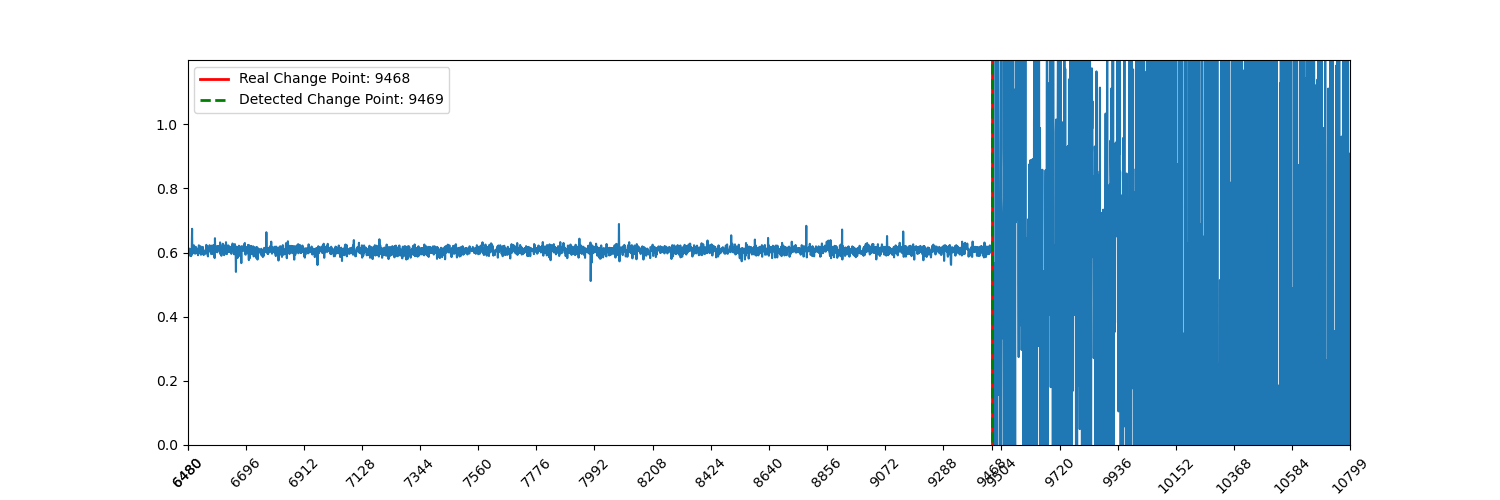} }
	\caption{(a): Loss values of the extracted gyroscopes data, the samples located in the interval $[6480,10799]$ are set as the elements of $\boldsymbol G$ (in \textbf{Stage 2}). (b): Loss values of each samples in $\boldsymbol G$, the change point is accurately estimated.}
	\label{fig:plot3}
\end{figure}

\section{Proof of Theorem \ref{thm:main}}\label{sec5}
In this section, we prove Theorem \ref{thm:main}, based on a decomposition for $d_{\mathcal H}(\pi^l,\boldsymbol {\hat g}_{\#}\nu)$ and the estimates for each term in the decomposition.  

\subsection{Auxiliary lemmas}
In order to prove Theorem \ref{thm:main}, we need the following auxiliary lemmas. The first lemma is an error decomposition for $d_{\mathcal H}(\pi^l,\boldsymbol {\hat g}_{\#}\nu)$, while the others are estimates for each term in this decomposition. 
\subsubsection{Error decomposition of $d_{\mathcal H}(\pi^l,\boldsymbol {\hat g}_{\#}\nu)$}  
\begin{lemma}[Error Decomposition]\label{lem:decom}
	{     
	Suppose $\boldsymbol g_\#\nu$ is supported on
	$\left[-a_n,a_n\right]^{lp}$ for all $\boldsymbol g \in \mathcal{G}$ and assume $\mathcal{D}$ is in symmetric class. Let $\boldsymbol{\hat{g}}$ be defined in \eqref{e:epGANs} and $\mathcal{H}^b$ be function class defined on $\mathbb{R}^{lp}$ with $h\in \mathcal{H}^b$ and $b=r+q$.

		(i). Suppose Assumption \ref{assum2} is satisfied, there exists $c >0$ such that for any 
		$$
		\mathcal{D}=\{d\in\mathcal{N} \mathcal{N}\left(N_{\mathcal{D}}, L_{\mathcal{D}}, K_{\mathcal{D}}\right): d(\mathbb{R}^{lp}) \subseteq[-B,B] ,B<\infty\},
		$$
		with $N_{\mathcal{D}}\ge c(K_{\mathcal{D}}/\log^\gamma K_{\mathcal{D}})^{(2lp+b)/(2lp+2)}$,$L_{\mathcal{D}}\ge 4\gamma+4$, $\gamma:=\lceil\log_2(lp+r)\rceil$, one has
		\begin{eqnarray*}
			d_{\mathcal H}(\pi^l, \boldsymbol {\hat g}_{\#}\nu)
			&\le& 2(2a_n)^b({  K_{\mathcal{D}}/\log^\gamma K_{\mathcal{D}}})^{-\frac b{lp+1}}+2(1+B) lp e^{-a_n+v^2 / 2}\\
			&&+\sup_{d\in\mathcal D}\Big\{\E_{\pi^l}[d]-\frac1{\bar n}\sum_{i=1}^{\bar n}d(\boldsymbol X_i^\prime)\Big\}+\inf_{\boldsymbol g\in\mathcal G}\sup_{d\in\mathcal D}\Big\{ \frac1{\bar n}\sum_{i=1}^{\bar n}d(\boldsymbol X_i^\prime)-\E_{\boldsymbol g_\#\nu}[d]\Big\}\\
			&&+2\sup_{d\circ \boldsymbol g\in\mathcal D\circ\mathcal G}\{\E_{\hat\nu_m}[d\circ\boldsymbol  g]-\E_{\nu}[d\circ \boldsymbol g]\}.
		\end{eqnarray*}
		where $\bar n=n-l+1$, $\hat{\nu}_m:=\frac{1}{m} \sum_{i=1}^m \delta_{\boldsymbol{z}_i}$.
		
		(ii).	Suppose Assumption \ref{assum4} is satisfied, there exists $c >0$ such that for any $\mathcal{D}$ defined in (i) above,
		\begin{eqnarray*}
			d_{\mathcal H}(\pi^l, \boldsymbol {\hat g}_{\#}\nu)
			&\le& 2(2a_n)^b({  K_{\mathcal{D}}/\log^\gamma K_{\mathcal{D}}})^{-\frac b{lp+1}}+C_\omega(1+B) l a_n^{-\omega}\\
			&&+\sup_{d\in\mathcal D}\Big\{\E_{\pi^l}[d]-\frac1{\bar n}\sum_{i=1}^{\bar n}d(\boldsymbol X_i^\prime)\Big\}+\inf_{\boldsymbol g\in\mathcal G}\sup_{d\in\mathcal D}\Big\{ \frac1{\bar n}\sum_{i=1}^{\bar n}d(\boldsymbol X_i^\prime)-\E_{\boldsymbol g_\#\nu}[d]\Big\}\\
			&&+2\sup_{d\circ \boldsymbol g\in\mathcal D\circ\mathcal G}\{\E_{\hat\nu_m}[d\circ\boldsymbol  g]-\E_{\nu}[d\circ \boldsymbol g]\}.
		\end{eqnarray*}

}
	
\end{lemma}
\begin{proof}
	(i). For any $\e>0$, it is easy to know that there exist an $h_\e\in\mathcal H$ and a $d_\e\in\mathcal D$ such that
	\begin{eqnarray*}
		\sup_{h\in\mathcal H}\{\E_{\pi^l}[h]-\E_{\boldsymbol{\hat {g}}_\#\nu}[h]\}\le \E_{\pi^l}[h_\e]-\E_{\boldsymbol{\hat {g}}_\#\nu}[h_\e]+\e,
	\end{eqnarray*}
	and 
	\begin{eqnarray*}
		\|h_\e-d_\e\|_\infty\le \inf_{d\in\mathcal D}\|h_\e-d\|_{\infty}+\e.	
	\end{eqnarray*}	
	Then, we have
	\begin{eqnarray}\label{e:decom1}
		&&\sup_{h\in\mathcal H}\{\E_{\pi^l}[h]-\E_{\boldsymbol{\hat {g}}_\#\nu}[h]\}\nonumber\\
		&\le&  \E_{\pi^l}[h_\e]-\E_{\boldsymbol{\hat {g}}_\#\nu}[h_\e]+\e\nonumber\\
		&=& \E_{\pi^l}[h_\e]-\E_{\pi^l}[d_\e]+\E_{\pi^l}[d_\e]-\E_{\boldsymbol{\hat {g}}_\#\nu}[d_\e]+\E_{\boldsymbol{\hat {g}}_\#\nu}[d_\e]-\E_{\boldsymbol{\hat {g}}_\#\nu}[h_\e]+\e\nonumber\\
		&=& \{\E_{\pi^l}[h_\e-d_\e]+\E_{\boldsymbol{\hat {g}}_\#\nu}[d_\e-h_\e]\}+\{\E_{\pi^l}[d_\e]-\E_{\boldsymbol{\hat {g}}_\#\nu}[d_\e]\}
		+\e.
	\end{eqnarray}
	
	Step 1: We first deal with the first terms of \eqref{e:decom1}. Let $A_n=[-a_n,a_n]^{lp}$, it is easy to see that
	\begin{eqnarray}\label{e:error1}
		&&\E_{\pi^l}[h_\e-d_\e]+\E_{\boldsymbol {\hat g}_\#\nu}[d_\e-h_\e]\\
		&=& \E_{\pi^l}[(h_\e-d_\e)1_{A^c_n}]+\E_{\pi^l}[(h_\e-d_\e)1_{A_n}]+\E_{\boldsymbol {\hat g}_\#\nu}[d_\e-h_\e].\nonumber
	\end{eqnarray}
	For the first term of \eqref{e:error1}, the boundedness of $h_\e$, $d_\e$ and sub-Gaussian assumption yield
	\begin{eqnarray}\label{e:3.3}
		\E_{\pi^l}[(h_\e-d_\e)1_{A^c_n}]
		&\le& \|h_\e-d_\e\|_\infty\E_{\pi^l}[1_{A_n^c}]\\
		&\le& (1+\|d\|_\infty+\e)\mathbb{P}_{\boldsymbol{x} \sim {\pi^l}}\left(\|\boldsymbol{x}\|_{\infty} \geq a_n\right)\nonumber \\
		&\le& (1+\|d\|_\infty+\e)\sum_{i=1}^{lp} \mathbb{P}_{\boldsymbol{x} \sim {\pi^l}}\left(\left|x_i\right| \geq a_n\right)\nonumber \\
		&\le& 2(1+B+\e) lp e^{-a_n+v^2 / 2} \nonumber.
	\end{eqnarray}
	Since $\boldsymbol{g}(\R)\subset A_n$, one can get the estimate of last two terms of \eqref{e:error1}
	\begin{eqnarray*}
		\mathbb{E}_{\pi^l}\left[\left(h_{\varepsilon}-d_{\varepsilon}\right) 1_{A_n}\right]+\mathbb{E}_{\boldsymbol{\hat{g}} \# \nu}\left[d_{\varepsilon}-h_{\varepsilon}\right] \leq 2 \sup _{h \in \mathcal{H}\left(A_n\right)} \inf _{d \in \mathcal{D}\left(A_n\right)}\|h-d\|_{\infty}+2 \varepsilon.
	\end{eqnarray*} 
	
	\citet[Theorem 3.2]{jiao2022approximation} guarantees that, for any $\bar h\in \mathcal{H}^b([0,1]^{lp}), b=r+q$, there exists $c >0$ such that for any $\bar d\in \mathcal{NN}(N,L,K)$, $N\ge c(K/\log^\gamma K)^{(2lp+b)/(2lp+2)}$ and $L\ge 4\gamma+2$, $\gamma:=\lceil\log_2(lp+r)\rceil$, 
	\begin{eqnarray*}
		\sup_{\bar h\in\mathcal H([0,1]^{lp})}\inf_{\bar d\in\mathcal {NN}(N,L,K)}\|\bar h-\bar d\|_\infty\le (K/ \log^\gamma K)^{-\frac b{lp+1}}.
	\end{eqnarray*}
	Let $h(x){  1_{A_n}}=(2a_n)^b\bar h(\frac{x}{2a_n}+\frac12)$, $d_0(x){  1_{A_n}}=(2a_n)^b\bar d(\frac{x}{2a_n}+\frac12)$, and
	\begin{eqnarray*}
		d(x){  1_{A_n}}=(d_0(x)\land B)\lor(-B)=\sigma(d_0(x)+B)-\sigma(d_0(x)-B)-B.
	\end{eqnarray*}
	Then, we have
	\begin{eqnarray*}
		\sup _{h \in \mathcal{H}\left(A_n\right)} \inf _{d \in \mathcal{D}\left(A_n\right)}\|h-d\|_\infty&=&\sup _{\bar h \in \mathcal{H}\left([0,1]^{lp}\right)} \inf _{d \in \mathcal{D}\left(A_n\right)}\|(2a_n)^b\bar h(\frac{x}{2a_n}+\frac12)-d(x)\|_{\infty}\\
		&\le& (2a_n)^b\sup_{\bar h\in\mathcal H([0,1]^{lp})}\inf_{\bar d\in\mathcal D\left([0,1]^{lp}\right)}\|\bar h-\bar d\|_\infty\le(2a_n)^b(K_{\mathcal{D}}/\log^{\gamma}K_{\mathcal{D}})^{-\frac b{lp+1}},
	\end{eqnarray*}
	where $d\in \mathcal{NN}(N_{\mathcal{D}},L_{\mathcal{D}},K_{\mathcal{D}})$, with  $N_{\mathcal{D}}\ge c(K_{\mathcal{D}}/\log^\gamma K_{\mathcal{D}})^{(2lp+b)/(2lp+2)}$ and $L_{\mathcal{D}}\ge 4\gamma+4$, $\gamma:=\lceil\log_2(lp+r)\rceil$. Thus, we obtain
	\begin{eqnarray*}
		\E_{\pi^l}[h_\e-d_\e]+\E_{\boldsymbol {\hat g}_\#\nu}[d_\e-h_\e]
		&\le& 2(2a_n)^b(K_{\mathcal{D}}/\log^{\gamma}K_{\mathcal{D}})^{-\frac b{lp+1}}+ 2(1+B+\e) lp e^{-a_n+v^2 / 2}+2\e.
	\end{eqnarray*}	
	
	Step 2: For the rest terms of \eqref{e:decom1}, that is $\E_{\pi^l}[d_\e]-\E_{\boldsymbol{\hat {g}}_\#\nu}[d_\e]$, one considers the empirical process and gets
	\begin{eqnarray*}
		\E_{\pi^l}[d_\e]-\E_{\boldsymbol{\hat {g}}_\#\nu}[d_\e]
		&=&\E_{\pi^l}[d_\e]-\frac1{\bar n}\sum_{i=1}^{\bar n}d_\e(\boldsymbol X_i^\prime)+\frac1{\bar n}\sum_{i=1}^{\bar n}d_\e(\boldsymbol X_i^\prime)-\frac1m\sum_{i=1}^{m}d_\e(\boldsymbol {\hat g}(\boldsymbol z_i))\\
		&&+\frac1m\sum_{i=1}^{m}d_\e(\boldsymbol {\hat g}(\boldsymbol z_i))
		-\E_{\boldsymbol \nu}[d_\e(\boldsymbol{\hat {g}}(\boldsymbol z))]\\
		&\le&\sup_{d\in\mathcal D}\Big\{\E_{\pi^l}[d]-\frac1{\bar n}\sum_{i=1}^{\bar n}d(\boldsymbol X_i^\prime)\Big\}+\inf_{\boldsymbol g\in\mathcal G}\sup_{d\in\mathcal D}\Big\{ \frac1n\sum_{i=1}^{\bar n}d(\boldsymbol X_i^\prime)-\frac1m\sum_{i=1}^m d(\boldsymbol g(\boldsymbol z_i))\Big\}\\
		&&+\sup_{d\circ \boldsymbol g\in\mathcal D\circ\mathcal G}\{\E_{\hat\nu_m}[d\circ\boldsymbol  g]-\E_{\nu}[d\circ \boldsymbol g]\}\\
		&\le&\sup_{d\in\mathcal D}\Big\{\E_{\pi^l}[d]-\frac1{\bar n}\sum_{i=1}^{\bar n}d(\boldsymbol X_i^\prime)\Big\}+\inf_{\boldsymbol g\in\mathcal G}\sup_{d\in\mathcal D}\Big\{ \frac1{\bar n}\sum_{i=1}^{\bar n}d(\boldsymbol X_i^\prime)-\E_{\boldsymbol g_\#\nu}[d]\Big\}\\
		&&+\inf_{\boldsymbol g\in\mathcal G}\sup_{d\in\mathcal D}\Big\{\E_{\boldsymbol g_\#\nu}[d]-\frac1m\sum_{i=1}^m d( \boldsymbol g(\boldsymbol z_i))\Big\}+\sup_{d\circ \boldsymbol g\in\mathcal D\circ\mathcal G}\{\E_{\hat\nu_m}[d\circ \boldsymbol g]-\E_{\nu}[d\circ \boldsymbol g]\},
	\end{eqnarray*}
	where the first inequality follows the definition of $\boldsymbol {\hat g}$ and $\hat\nu_m:=\frac1m\sum_{i=1}^{m}\delta_{\boldsymbol z_i}$ is the empirical measure. Combining the inequality above with \eqref{e:decom1} and a straight calculation implies
	\begin{eqnarray*}
		d_{\mathcal H}({\pi^l}, \boldsymbol {\hat g}_{\#}\nu)
		&\le& 2(2a_n)^b(K_{\mathcal{D}}/\log^\gamma K_{\mathcal{D}})^{-\frac b{lp+1}}+2(1+B+\e) lp e^{-a_n+v^2 / 2}+3\e\\
		&&+\sup_{d\in\mathcal D}\Big\{\E_{\pi^l}[d]-\frac1{\bar n}\sum_{i=1}^{\bar n}d(\boldsymbol X_i^\prime)\Big\}+\inf_{\boldsymbol g\in\mathcal G}\sup_{d\in\mathcal D}\Big\{ \frac1{\bar n}\sum_{i=1}^{\bar n}d(\boldsymbol X_i^\prime)-\E_{\boldsymbol g_\#\nu}[d]\Big\}\\
		&&+2\sup_{d\circ \boldsymbol g\in\mathcal D\circ\mathcal G}\{\E_{\hat\nu_m}[d\circ\boldsymbol  g]-\E_{\nu}[d\circ \boldsymbol g]\}.
	\end{eqnarray*} 
	Let $\e\to0$ and the proof is complete. 
	
	{     
(ii). The proof is similar with (i) except the calculation of \eqref{e:3.3} in Step 1. For the first term of \eqref{e:error1},  Assumption \ref{assum4} and the Markov inequality imply	
\begin{eqnarray}\label{e:3.3(1)}
	\E_{\pi^l}[(h_\e-d_\e)1_{A^c_n}]
	&\le& \|h_\e-d_\e\|_\infty\E_{\pi^l}[1_{A_n^c}]\\
	&\le& (1+\|d\|_\infty+\e)\mathbb{P}_{\boldsymbol{x} \sim {\pi^l}}\left(\|\boldsymbol{x}\|_{\infty} \geq a_n\right)\nonumber \\
	&\le& (1+B+\e)\E_{\boldsymbol{x} \sim {\pi^l}}[\max_{1\le i\le l}\max_{1\le j\le p}|\boldsymbol{x}_{i,j}|^\omega]a_n^{-\omega}\nonumber\\
	&\le& l(1+B+\e)\E_{\boldsymbol{x} \sim {\pi}}[\|\boldsymbol{x}\|^\omega] a_n^{-\omega}\nonumber\\
	&\le& lC_\omega (1+B+\e)a_n^{-\omega}.\nonumber
\end{eqnarray}
Combining equality \eqref{e:3.3(1)} and Step 2 above, we obtain
	\begin{eqnarray*}
	d_{\mathcal H}({\pi^l}, \boldsymbol {\hat g}_{\#}\nu)
	&\le& 2(2a_n)^b(K_{\mathcal{D}}/\log^\gamma K_{\mathcal{D}})^{-\frac b{lp+1}}+lC_\omega(1+B+\e)a_n^{-\omega}+3\e\\
	&&+\sup_{d\in\mathcal D}\Big\{\E_{\pi^l}[d]-\frac1{\bar n}\sum_{i=1}^{\bar n}d(\boldsymbol X_i^\prime)\Big\}+\inf_{\boldsymbol g\in\mathcal G}\sup_{d\in\mathcal D}\Big\{ \frac1{\bar n}\sum_{i=1}^{\bar n}d(\boldsymbol X_i^\prime)-\E_{\boldsymbol g_\#\nu}[d]\Big\}\\
	&&+2\sup_{d\circ \boldsymbol g\in\mathcal D\circ\mathcal G}\{\E_{\hat\nu_m}[d\circ\boldsymbol  g]-\E_{\nu}[d\circ \boldsymbol g]\}.
\end{eqnarray*} 
Let $\e\to0$ and the proof is complete.

}
	
\end{proof}
\subsubsection{Statistical error of the GAN estimator}
Recall that $\boldsymbol X_i:=\boldsymbol G(...,\boldsymbol{\e}_i)$, which means for any $i,j \in \mathbb{Z}$, $\boldsymbol X_i,\boldsymbol X_j$ is not independent. We denote the $sl$-dependence form of $\boldsymbol X_i$ by $\tilde{\boldsymbol X}_i=\E[\boldsymbol X_i|\mathcal F_{i-sl}^i]$ and $\mathcal F_{i-sl}^i=\sigma(\boldsymbol\e_{i-sl},...,\boldsymbol\e_i)$. Thus $\tilde{\boldsymbol X}_i$ only depends on $\{\boldsymbol\e_{i-sl},...,\boldsymbol\e_i\}$. We denote 
$\{\tilde{\boldsymbol X}_{i},\tilde{\boldsymbol X}_{i+1},...,\tilde{\boldsymbol X}_{i+(l-1)}\}$ as $\tilde{\boldsymbol X}_i^\prime$, it is easy to know that $\tilde{\boldsymbol X}_i^\prime$ is independent on $\tilde{\boldsymbol X}_{i-l(s+1)}^\prime$.

Let the stationary distribution of time series $\{\tilde{\boldsymbol X}_i\}$ be $\tilde{\pi}$, and the joint distribution of $\{\tilde{\boldsymbol X}_i^\prime\}$ be $\tilde{{\pi^l}}$. In general, we choose $n$ large enough that for some $s,l\in\mathbb N_+$, such that $sl\ll n$. 

Statistical error of the GAN estimator, $\sup_{d\in\mathcal D}\{\E_{\pi^l}[d]-\frac1{\bar n}\sum_{i=1}^{\bar n}d({\boldsymbol X}_i^\prime)\}$, can be controlled by the concentration for suprema of the GAN estimator as in following Lemma.
\begin{lemma}[Concentration for Supremum of GAN Empirical Processes]\label{lem:concentration}
	Suppose Assumption \ref{assum1}, Assumption \ref{assum2} or \ref{assum4} are satisfied and
	$$
	\mathcal{D}=\{d\in\mathcal{N} \mathcal{N}\left(N_{\mathcal{D}}, L_{\mathcal{D}}, K_{\mathcal{D}}\right): d(\mathbb{R}^{lp}) \subseteq[-B,B] ,B<\infty\},
	$$
	then with probability at least $1-2l\mathrm{e}^{-t_1}-le^{-C_2sl/2\beta}$
	\begin{eqnarray*}
		&&\sup _{d \in \mathcal{D}}\left|\frac{1}{\bar n} \sum_{i=1}^{\bar  n}\left(d\left({\boldsymbol X}_i^\prime\right)-\mathbb{E}_{{\pi^l}}\left[d\left({\boldsymbol X}_i^\prime\right)\right]\right)\right|\\
		&\le&\frac{1}{\bar n}\left( nlC_1^{1/\beta}K_{\mathcal{D}}e^{-C_2(sl+1)/\beta}/(1-e^{-C_2/\beta})+nlC_1^{1/\beta}K_{\mathcal{D}}e^{-C_2sl/(2\beta)-C_2/\beta}/({1-e^{-C_2/\beta}})\right)\\
		&&+\frac{1}{\bar n}C_3^\prime\left[l\sqrt{2sl+n}  K_{\mathcal{D}} \cdot \sqrt{L_{\mathcal{D}}+1+\log (lp)}+B \sqrt{t_1snl}\right].
	\end{eqnarray*}
where $\bar n=n-l+1$.
\end{lemma}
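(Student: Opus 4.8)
The plan is to remove the temporal dependence by passing to the $s$-truncation $\tilde{\boldsymbol X}_i=\E[\boldsymbol X_i\mid\mathcal F_{i-s}^i]$ and then to treat the resulting $s$-dependent array by a blocking argument together with standard concentration and Rademacher-complexity estimates. First I would write
\[
\sup_{d\in\mathcal D}\Big|\frac1n\sum_{i=1}^n\big(d(\boldsymbol X_i)-\E_\pi[d]\big)\Big|\ \le\ \mathrm{(I)}+\mathrm{(II)}+\mathrm{(III)},
\]
where $\mathrm{(I)}:=\sup_{d\in\mathcal D}\big|\E_\pi[d]-\E_{\tilde\pi}[d]\big|$ is the deterministic bias created by the truncation, $\mathrm{(II)}:=\sup_{d\in\mathcal D}\frac1n\sum_{i=1}^n|d(\boldsymbol X_i)-d(\tilde{\boldsymbol X}_i)|$ is the pathwise replacement error, and $\mathrm{(III)}:=\sup_{d\in\mathcal D}\big|\frac1n\sum_{i=1}^n(d(\tilde{\boldsymbol X}_i)-\E_{\tilde\pi}[d])\big|$ is the empirical process of the identically distributed, $s$-dependent sequence $\tilde{\boldsymbol X}_1,\dots,\tilde{\boldsymbol X}_n$.

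Terms $\mathrm{(I)}$ and $\mathrm{(II)}$ are handled through the Lipschitz bound $\|d(\boldsymbol x)-d(\boldsymbol y)\|\le K_{\mathcal D}\|\boldsymbol x-\boldsymbol y\|$ from the Remark following Assumption~\ref{assum3}, so that $\mathrm{(I)}\le K_{\mathcal D}\,\E\|\boldsymbol X_0-\tilde{\boldsymbol X}_0\|$ and $\mathrm{(II)}\le K_{\mathcal D}\,\frac1n\sum_{i=1}^n\|\boldsymbol X_i-\tilde{\boldsymbol X}_i\|$. Writing $\boldsymbol X_0-\tilde{\boldsymbol X}_0$ as a sum of couplings of the innovations at lags exceeding $s$ and using the geometric moment contraction (Assumption~\ref{assum1}) to bound the lag-$k$ contribution by $C_1^{1/\beta}e^{-C_2 k/\beta}$, one obtains $\|\boldsymbol X_0-\tilde{\boldsymbol X}_0\|_{L^\beta}\le C_1^{1/\beta}e^{-C_2(s+1)/\beta}/(1-e^{-C_2/\beta})$. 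Since $\|\cdot\|_{L^1}\le\|\cdot\|_{L^\beta}$, this already bounds $\mathrm{(I)}$ by the first term of the asserted inequality, deterministically; and for $\mathrm{(II)}$ the same expectation bound combined with Markov's inequality at threshold $e^{C_2 s/2\beta}$ times the mean gives the second term on an event of probability at least $1-e^{-C_2 s/2\beta}$.

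The core of the argument, and the step I expect to be the main obstacle, is $\mathrm{(III)}$. I would exploit that $\tilde{\boldsymbol X}_i$ depends only on $(\boldsymbol\e_{i-s},\dots,\boldsymbol\e_i)$: partition $\{1,\dots,n\}$ into $n/(2s)$ consecutive blocks of length $2s$ (this is where the hypothesis $n/2s\in\mathbb N$ is used), so that any two blocks at index-distance at least two depend on disjoint sets of innovations; hence the odd-numbered blocks form a mutually independent family and so do the even-numbered ones. Splitting $\frac1n\sum_{i=1}^n(d(\tilde{\boldsymbol X}_i)-\E_{\tilde\pi}[d])$ over these two interlaced families and applying the bounded-difference inequality to $\sup_{d\in\mathcal D}(\cdot)$ within each family — perturbing one block changes the supremum by at most a quantity of order $sB/n$ — gives, after a union bound over the two families, a fluctuation term of order $B\sqrt{st_1/n}$ with probability at least $1-2e^{-t_1}$. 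The remaining expectation $\E\big[\sup_{d\in\mathcal D}|\frac1n\sum_i(d(\tilde{\boldsymbol X}_i)-\E_{\tilde\pi}[d])|\big]$ I would control by symmetrizing at the block level — legitimate precisely because the blocks within each family are independent — and then invoking a norm-constrained Rademacher-complexity bound for ReLU networks (of Golowich--Rakhlin--Shamir type) to estimate the Rademacher average of $\mathcal D=\mathcal{NN}(N_{\mathcal D},L_{\mathcal D},K_{\mathcal D})$ by $\sqrt{C_3/n}\,K_{\mathcal D}\sqrt{L_{\mathcal D}+1+\log p}$, with $C_3$ absorbing the numerical constant in that bound and a bound on the sample radius. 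Combining $\mathrm{(I)}$, $\mathrm{(II)}$ and $\mathrm{(III)}$ and adding the failure probabilities $e^{-C_2 s/2\beta}$ and $2e^{-t_1}$ yields the claimed estimate.

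Two points need care. First, the inflation caused by the dependence must enter the concentration step as $\sqrt s$ rather than $s$, which forces the bounded-difference bookkeeping to be carried out at the level of the length-$2s$ blocks instead of single observations. Second, the usual symmetrization inequality is false for a dependent array, so the interlaced independent-block decomposition is essential and cannot be bypassed by pretending $\{\tilde{\boldsymbol X}_i\}$ is an i.i.d. sample; note also that the truncated sequence is $s$-dependent by construction, so $\mathrm{(III)}$ needs no hypothesis beyond this and only $\mathrm{(I)}$--$\mathrm{(II)}$ invoke Assumption~\ref{assum1}.
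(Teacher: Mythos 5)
Your proposal reproduces the paper's proof essentially step for step: pass to the $s$-truncation $\tilde{\boldsymbol X}_i=\E[\boldsymbol X_i\mid\mathcal F_{i-s}^i]$, bound the resulting bias and the pathwise replacement error via the Lipschitz constant $K_{\mathcal D}$ and the geometric moment contraction (using Markov's inequality for the random part, exactly as in Lemma \ref{lem:difdepen}), and handle the $s$-dependent empirical process by an interlaced independent-block decomposition, a Massart/bounded-differences concentration step giving the $B\sqrt{st_1/n}$ fluctuation, and block-level symmetrization fed into the Golowich--Rakhlin--Shamir Rademacher bound of Lemma \ref{lem:complexity}. The only departures --- length-$2s$ odd/even blocks in place of the paper's paired length-$s$ $I_j$/$J_j$ blocks, and McDiarmid in place of Massart --- are cosmetic and only change numerical constants.
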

In order to prove Lemma \ref{lem:concentration}, we need to prove following lemma \ref{lem:difdepen} first. 

We can divide $\{\boldsymbol X_i^\prime\}_{1\leq i\leq n-l+1}$ into the following $l$ parts, that is
\begin{eqnarray*}
	\{\boldsymbol X_k^\prime, \boldsymbol X_{k+l}^\prime, ..., \boldsymbol X_{k+(Q_k-1)l}^\prime\},1\leq k\leq l
\end{eqnarray*}
where {$Q_k=\lfloor n/l\rfloor$, if $k\le (n \text{ mod } l)+1$; $Q_k=\lfloor n/l\rfloor-1$, if $(n\text{ mod } l)+1< k\leq l$.} In this way, there is no overlapping components between any two vectors in $\{\boldsymbol X_k^\prime, \boldsymbol X_{k+l}^\prime, ..., \boldsymbol X_{k+Q_kl}^\prime\}$. 

\begin{lemma}\label{lem:difdepen}
	Suppose Assumption \ref{assum1} is satisfied and
	$$
	\mathcal{D}=\{d\in\mathcal{N} \mathcal{N}\left(N_{\mathcal{D}}, L_{\mathcal{D}}, K_{\mathcal{D}}\right): d(\mathbb{R}^{lp}) \subseteq[-B,B] ,B<\infty\}.
	$$
	(i) We have  
	\begin{eqnarray*}
		\sup _{d \in \mathcal{D}}\left|\sum_{i=1}^{Q_k}\left(\mathbb{E}_{\pi^l}\left[d\left(\boldsymbol X_{k+l(i-1)}^\prime\right)\right]-\mathbb{E}_{\tilde{{\pi^l}}}\left[d\left(\tilde{\boldsymbol X}_{k+l(i-1)}^\prime\right)\right]\right)\right| \leq nC_1^{1/\beta}K_{\mathcal{D}}e^{-C_2(sl+1)/\beta}/(1-e^{-C_2/\beta}).
	\end{eqnarray*}
	(ii) With probability at least $1-e^{-C_2sl/2\beta}$, we have
	\begin{eqnarray*}
		\sup_{d\in\mathcal D}\big\{\sum_{i=1}^{Q_k}d(\boldsymbol X_{k+l(i-1)}^\prime)-\sum_{i=1}^{Q_k}d(\tilde{\boldsymbol X} _{k+l(i-1)}^\prime)  \big\}
		\le  nC_1^{1/\beta}K_{\mathcal{D}}e^{-C_2sl/(2\beta)-C_2/\beta}/({1-e^{-C_2/\beta}}).
	\end{eqnarray*}		
\end{lemma}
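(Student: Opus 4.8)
The plan is to reduce both parts to a single geometric--decay estimate for the scalar random variable $\sum_{i=1}^n\|\boldsymbol X_i-\tilde{\boldsymbol X}_i\|$ and then invoke the Lipschitz property of the discriminator class. Recall from the Remark following Assumption \ref{assum3} that every $d\in\mathcal D\subseteq\mathcal{NN}(N_{\mathcal D},L_{\mathcal D},K_{\mathcal D})$ is $K_{\mathcal D}$--Lipschitz, hence $|d(\boldsymbol X_i)-d(\tilde{\boldsymbol X}_i)|\le K_{\mathcal D}\|\boldsymbol X_i-\tilde{\boldsymbol X}_i\|$ uniformly in $d$. Using $\E_\pi[d(\boldsymbol X_i)]=\E[d(\boldsymbol X_i)]$ and $\E_{\tilde\pi}[d(\tilde{\boldsymbol X}_i)]=\E[d(\tilde{\boldsymbol X}_i)]$, taking expectations and then the supremum over $d$ gives
\[
\sup_{d\in\mathcal D}\Bigl|\sum_{i=1}^n\bigl(\E_\pi[d(\boldsymbol X_i)]-\E_{\tilde\pi}[d(\tilde{\boldsymbol X}_i)]\bigr)\Bigr|\le K_{\mathcal D}\sum_{i=1}^n\E\|\boldsymbol X_i-\tilde{\boldsymbol X}_i\|,
\]
and similarly, pathwise, $\sup_{d\in\mathcal D}\sum_{i=1}^n\bigl(d(\boldsymbol X_i)-d(\tilde{\boldsymbol X}_i)\bigr)\le K_{\mathcal D}\sum_{i=1}^n\|\boldsymbol X_i-\tilde{\boldsymbol X}_i\|$. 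So it suffices to control $\sum_i\|\boldsymbol X_i-\tilde{\boldsymbol X}_i\|$ in expectation for (i) and with high probability for (ii).

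The key estimate is $\E\|\boldsymbol X_i-\tilde{\boldsymbol X}_i\|\le C_1^{1/\beta}e^{-C_2(s+1)/\beta}$ for every $i$. By stationarity of $\{\boldsymbol X_n\}$ and shift--invariance of the joint law of the innovations, this expectation is independent of $i$, so I take $i=s+1$: then $\mathcal F_{i-s}^i=\sigma(\boldsymbol \e_1,\dots,\boldsymbol \e_{s+1})$ and $\boldsymbol X_{s+1}=\boldsymbol G(\boldsymbol \xi_0,\boldsymbol \e_1,\dots,\boldsymbol \e_{s+1})$ with $\boldsymbol \xi_0=(\dots,\boldsymbol \e_{-1},\boldsymbol \e_0)$ independent of $\sigma(\boldsymbol \e_1,\dots,\boldsymbol \e_{s+1})$. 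Since $(\boldsymbol \e_n')$ is an i.i.d.\ copy of $(\boldsymbol \e_n)$ and $\boldsymbol \xi_0'=(\dots,\boldsymbol \e_{-1}',\boldsymbol \e_0')$ is independent of everything else, conditioning on $\sigma(\boldsymbol \e_1,\dots,\boldsymbol \e_{s+1})$ and integrating out $\boldsymbol \xi_0$ identifies $\tilde{\boldsymbol X}_{s+1}=\E[\boldsymbol G(\boldsymbol \xi_0',\boldsymbol \e_1,\dots,\boldsymbol \e_{s+1})\mid\mathcal G]$ with $\mathcal G:=\sigma(\boldsymbol \xi_0,\boldsymbol \e_1,\dots,\boldsymbol \e_{s+1})$, so that
\[
\boldsymbol X_{s+1}-\tilde{\boldsymbol X}_{s+1}=\E\bigl[\boldsymbol X_{s+1}-\boldsymbol G(\boldsymbol \xi_0',\boldsymbol \e_1,\dots,\boldsymbol \e_{s+1})\,\big|\,\mathcal G\bigr].
\]
Applying the conditional Jensen inequality, then Hölder's inequality $\|\cdot\|_{L^1}\le\|\cdot\|_{L^\beta}$ (valid since $\beta\ge1$), and finally the geometric moment contraction of Assumption \ref{assum1} with horizon $s+1$ gives $\E\|\boldsymbol X_{s+1}-\tilde{\boldsymbol X}_{s+1}\|\le\bigl(\E\|\boldsymbol X_{s+1}-\boldsymbol G(\boldsymbol \xi_0',\boldsymbol \e_1,\dots,\boldsymbol \e_{s+1})\|^\beta\bigr)^{1/\beta}\le C_1^{1/\beta}e^{-C_2(s+1)/\beta}$. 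Summing over $i=1,\dots,n$ and using $1/(1-e^{-C_2/\beta})\ge1$ yields $\sum_{i=1}^n\E\|\boldsymbol X_i-\tilde{\boldsymbol X}_i\|\le nC_1^{1/\beta}e^{-C_2(s+1)/\beta}\le \tfrac{n}{1-e^{-C_2/\beta}}C_1^{1/\beta}e^{-C_2(s+1)/\beta}$, which together with the first display proves (i). (The factor $1/(1-e^{-C_2/\beta})$ also appears organically if one instead telescopes $\boldsymbol X_i-\tilde{\boldsymbol X}_i$ over the reverse--martingale increments $\E[\boldsymbol X_i\mid\mathcal F_{i-j}^i]-\E[\boldsymbol X_i\mid\mathcal F_{i-j+1}^i]$, $j>s$, bounds the $L^\beta$--norm of each increment by a constant multiple of $e^{-C_2 j/\beta}$ via the same coupling, and sums the resulting geometric series.)

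For (ii), set $Y:=\sum_{i=1}^n\|\boldsymbol X_i-\tilde{\boldsymbol X}_i\|\ge0$; the previous paragraph gives $\E Y\le nC_1^{1/\beta}e^{-C_2(s+1)/\beta}$. Markov's inequality at level $a:=nC_1^{1/\beta}e^{-C_2 s/(2\beta)-C_2/\beta}$ gives
\[
\mathbb{P}(Y\ge a)\le \frac{\E Y}{a}\le \frac{e^{-C_2(s+1)/\beta}}{e^{-C_2 s/(2\beta)-C_2/\beta}}=e^{-C_2 s/(2\beta)},
\]
so with probability at least $1-e^{-C_2 s/2\beta}$ one has $Y< nC_1^{1/\beta}e^{-C_2 s/(2\beta)-C_2/\beta}$; combining this with the pathwise bound of the first paragraph and $1/(1-e^{-C_2/\beta})\ge1$ yields the stated inequality. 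The main obstacle I anticipate is the step identifying $\boldsymbol X_i-\tilde{\boldsymbol X}_i$ with a conditional expectation of $\boldsymbol X_i-\boldsymbol G(\boldsymbol \xi_0',\dots)$: one must correctly recognize the $s$--truncated conditional expectation $\E[\boldsymbol X_i\mid\mathcal F_{i-s}^i]$ as an integration of the infinite innovation past against an independent copy, so that Assumption \ref{assum1} applies with exactly the horizon $s+1$; the remaining steps (the Lipschitz reduction, the summation over $i$, and Markov's inequality) are routine.
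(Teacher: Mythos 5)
Your proof is correct, and it takes a genuinely different route from the paper's for the key estimate on $\E\|\boldsymbol X_i-\tilde{\boldsymbol X}_i\|$. The paper keeps the discriminator inside the estimate and telescopes $d(\boldsymbol X_i)-d(\tilde{\boldsymbol X}_i)$ over the full martingale decomposition $\sum_{j>s}\bigl(d(\E[\boldsymbol X_i\mid\mathcal F_{-j}^i])-d(\E[\boldsymbol X_i\mid\mathcal F_{-j+1}^i])\bigr)$, bounds each increment by $K_{\mathcal D}\,C_1^{1/\beta}e^{-C_2(i+j)/\beta}$ via the Lipschitz property and Assumption~\ref{assum1}, and sums the resulting geometric series to land exactly on the factor $e^{-C_2(s+1)/\beta}/(1-e^{-C_2/\beta})$. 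You instead strip off $d$ first by the Lipschitz reduction $\sup_d\bigl|\sum_i(d(\boldsymbol X_i)-d(\tilde{\boldsymbol X}_i))\bigr|\le K_{\mathcal D}\sum_i\|\boldsymbol X_i-\tilde{\boldsymbol X}_i\|$, and then bound $\E\|\boldsymbol X_i-\tilde{\boldsymbol X}_i\|$ in one step by recognizing $\boldsymbol X_i-\tilde{\boldsymbol X}_i$ as the conditional expectation of the coupling error $\boldsymbol X_i-\boldsymbol G(\boldsymbol\xi_{i-s-1}',\boldsymbol\e_{i-s},\dots,\boldsymbol\e_i)$ given $\mathcal G=\sigma(\boldsymbol\xi_{i-s-1},\boldsymbol\e_{i-s},\dots,\boldsymbol\e_i)$, applying conditional Jensen, Lyapunov's inequality, and Assumption~\ref{assum1} at horizon $s+1$. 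Your identification of $\tilde{\boldsymbol X}_{s+1}=\E[\boldsymbol G(\boldsymbol\xi_0',\boldsymbol\e_1,\dots,\boldsymbol\e_{s+1})\mid\mathcal G]$ is correct (both equal $h(\boldsymbol\e_1,\dots,\boldsymbol\e_{s+1})$ with $h(e_1,\dots,e_{s+1})=\E\boldsymbol G(\boldsymbol\xi_0,e_1,\dots,e_{s+1})$, by independence and equality in distribution of $\boldsymbol\xi_0$ and $\boldsymbol\xi_0'$), and the stationarity reduction to $i=s+1$ is valid. Notably your route gives the slightly sharper per-term bound $C_1^{1/\beta}e^{-C_2(s+1)/\beta}$ without the factor $1/(1-e^{-C_2/\beta})$, which you then add back as slack to match the stated inequality; this is fine, and arguably cleaner. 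Part (ii) is the same Markov argument as the paper, just applied to the discriminator-free quantity $\sum_i\|\boldsymbol X_i-\tilde{\boldsymbol X}_i\|$ and combined with the pathwise Lipschitz bound, which also sidesteps the need to argue that $\sup_{d\in\mathcal D}\{\cdots\}$ is nonnegative via the symmetry of $\mathcal D$.
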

\begin{proof}
	(i) For the convenience of the mark, we denote $\boldsymbol G(\boldsymbol\xi_{i},\boldsymbol\e_{i+1},...,\boldsymbol\e_{j})$ as $\boldsymbol G_i^j$, and $\boldsymbol G(\boldsymbol\xi'_{i},\boldsymbol\e_{i+1},...,\boldsymbol\e_{j})$ as $\boldsymbol (\boldsymbol G')_i^j$. A straight calculation yields that
	\begin{eqnarray*}
		&&d(\boldsymbol X_i^\prime)-d(\tilde{\boldsymbol X}_i^\prime)\\
		=&&d(vec\{\E[\boldsymbol X_{i}|\mathcal F_{-\infty}^{i}],\E[\boldsymbol X_{i+1}|\mathcal F_{-\infty}^{i+1}],...,\E[\boldsymbol X_{i+(l-1)}|\mathcal F_{-\infty}^{i+(l-1)}]\})\\
		&&-d(vec\{\E[\boldsymbol X_{i}|\mathcal F_{i-sl}^{i}],\E[\boldsymbol X_{i+1}|\mathcal F_{i+1-sl}^{i+1}],...,\E[\boldsymbol X_{i+(l-1)}|\mathcal F_{i+(l-1)-sl}^{i+(l-1)}]\})\\
		=&&\sum_{j=sl+1-i}^{\infty}\big(d(vec\{\E[\boldsymbol X_{i}|\mathcal F_{-j}^{i}],\E[\boldsymbol X_{i+1}|\mathcal F_{-j+1}^{i+1}],...,\E[\boldsymbol X_{i+(l-1)}|\mathcal F_{-j+(l-1)}^{i+(l-1)}]\})\\
		&&\quad -d(vec\{\E[\boldsymbol X_{i}|\mathcal F_{-j+1}^{i}],\E[\boldsymbol X_{i+1}|\mathcal F_{-j+2}^{i+1}],...,\E[\boldsymbol X_{il}|\mathcal F_{-j+l}^{i+(l-1)}]\})\big)\\
		=&&\sum_{j=sl+1-i}^{\infty}\big(d(vec\{\E[\boldsymbol G_{-j}^{i}|\mathcal F_{-j}^{i}],\E[\boldsymbol G_{-j+1}^{i+1}|\mathcal F_{-j+1}^{i+1}],...,\E[\boldsymbol G_{-j+(l-1)}^{i+(l-1)}|\mathcal F_{-j+(l-1)}^{i+(l-1)}]\})\\
		&&\quad -d(vec\{\E[(\boldsymbol G')_{-j}^{i}|\mathcal F_{-j}^{i}],\E[(\boldsymbol G')_{-j+1}^{i+1}|\mathcal F_{-j+1}^{i+1}],...,\E[(\boldsymbol G')_{-j+(l-1)}^{i+(l-1)}|\mathcal F_{-j+(l-1)}^{i+(l-1)}]\})\big).
	\end{eqnarray*}
	
	Since $\mathcal D$ is set of Lipschitz function with Lipschitz constant $K_{\mathcal{D}}$, combining equality above with condition \eqref{e:GMC}, we can get 
	\begin{eqnarray}\label{e:3.4}
		&& \E\sup_{d\in\mathcal D}|\sum_{i=1}^{Q_k}(d(\boldsymbol X_{k+l(i-1)}^\prime)-d(\tilde{\boldsymbol X}_{k+l(i-1)}^\prime))|\\
		&\le& \E\sup_{d\in\mathcal D}\sum_{i=1}^{Q_k}\sum_{j=(s-i+1)l+1-k}^{\infty}|d(vec\{\E[\boldsymbol G_{-j}^{k+l(i-1)}|\mathcal F_{-j}^{k+l(i-1)}],...,\E[\boldsymbol G_{-j+l-1}^{k+li-1}|\mathcal F_{-j+l-1}^{k+li-1}]\})\nonumber\\
		&&\quad -d(vec\{\E[(\boldsymbol G')_{-j}^{k+l(i-1)}|\mathcal F_{-j}^{k+l(i-1)}],...,\E[(\boldsymbol G')_{-j+l-1}^{k+li-1}|\mathcal F_{-j+l-1}^{k+li-1}]\})|\nonumber\\
		&\le& K_{\mathcal{D}}\E\big\{\sum_{i=1}^{Q_k}\sum_{j=(s-i+1)l+1-k}^{\infty}\|vec\{\E[\boldsymbol G_{-j}^{k+l(i-1)}-(\boldsymbol G')_{-j}^{k+l(i-1)}\big|\mathcal F_{-j}^{k+l(i-1)}],...,\nonumber\\
		&&\quad \E[\boldsymbol G_{-j+l-1}^{k+li-1}-(\boldsymbol G')_{-j+l-1}^{k+li-1}\big|\mathcal F_{-j+l-1}^{k+li-1}] \}\| \big\}\nonumber\\
		&\le& K_{\mathcal{D}}\sum_{i=1}^{Q_k}\sum_{j=(s-i+1)l+1-k}^{\infty}\{\E\|\boldsymbol G_{-j}^{k+l(i-1)}-(\boldsymbol G')_{-j}^{k+l(i-1)}\|+...+\E\|\boldsymbol G_{-j+l-1}^{k+li-1}-(\boldsymbol G')_{-j+l-1}^{k+li-1}\|\}\nonumber\\
		&\le& lK_{\mathcal{D}}\sum_{i=1}^{Q_k}\sum_{j=(s-i+1)l+1-k}^{\infty} C_1^{1/\beta}e^{-C_2(k+l(i-1)+j)/\beta}
		= Q_kC_1^{1/\beta}lK_{\mathcal{D}}e^{-C_2(sl+1)/\beta}/(1-e^{-C_2/\beta})\nonumber\\
		&\le&  nC_1^{1/\beta}K_{\mathcal{D}}e^{-C_2(sl+1)/\beta}/(1-e^{-C_2/\beta})\nonumber.
	\end{eqnarray}
	
	Since 
	$$
	\sup _{d \in \mathcal{D}}|\sum_{i=1}^{Q_k}(\mathbb{E}_{\pi^l}\left[d(\boldsymbol X_{k+l(i-1)}^\prime)\right]-\mathbb{E}_{\tilde{{\pi^l}}}[d(\tilde{\boldsymbol X}_{k+l(i-1)}^\prime)])|\le \E\sup_{d\in\mathcal D}|\sum_{i=1}^{Q_k}(d(\boldsymbol X_{k+l(i-1)}^\prime)-d(\tilde{\boldsymbol X}_{k+l(i-1)}^\prime))|,
	$$ 
	then the result in \eqref{e:3.4} yields that
	$$
	\sup _{d \in \mathcal{D}}|\sum_{i=1}^{Q_k}(\mathbb{E}_{\pi^l}[d(\boldsymbol X_{k+l(i-1)}^\prime)]-\mathbb{E}_{\tilde{{\pi^l}}}[d(\tilde{\boldsymbol X}_{k+l(i-1)}^\prime)])|\le nC_1^{1/\beta}K_{\mathcal{D}}e^{-C_2(sl+1)/\beta}/(1-e^{-C_2/\beta}).
	$$
	
	(ii) The Chebyshev inequality and the result in (i) imply
	\begin{eqnarray*}
		\PP\Big(\sup_{d\in\mathcal D}\big\{\sum_{i=1}^{Q_k}d(\boldsymbol X_{k+l(i-1)}^\prime)-\sum_{i=1}^{Q_k}d(\tilde{\boldsymbol X}_{k+l(i-1)}^\prime)  \big\}\ge x\Big)&\le& \frac1x \E\sup_{d\in\mathcal D}\big\{\sum_{i=1}^{Q_k}(d(\boldsymbol X_{k+l(i-1)}^\prime)-d(\tilde {\boldsymbol X}_{k+l(i-1)}^\prime))  \big\}\\
		&\le& 	\frac1{x(1-e^{-C_2/\beta})} nC_1^{1/\beta}K_{\mathcal{D}}e^{-C_2(sl+1)/\beta}.
	\end{eqnarray*}	
	
	We can obtain with probability at least $1-e^{-C_2sl/(2\beta)}$,
	\begin{eqnarray*}
		\sup_{d\in\mathcal D}\big\{\sum_{i=1}^{Q_k}d(\boldsymbol X_{k+l(i-1)}^\prime)-\sum_{i=1}^{Q_k}d(\tilde{\boldsymbol X} _{k+l(i-1)}^\prime)  \big\}
		\le nC_1^{1/\beta}K_{\mathcal{D}}e^{-C_2sl/(2\beta)-C_2/\beta}/({1-e^{-C_2/\beta}}).
	\end{eqnarray*}		
\end{proof}

Given a real-valued function class $\mathcal{H}$ and some set of data points {  $\boldsymbol x_1,...,\boldsymbol x_n\in\R^d$}, we define the (empirical) Rademacher complexity $\hat{\mathcal{R}}_n(\mathcal{H})$ as
$$
\hat{\mathcal{R}}_n(\mathcal{H}):=\mathbb{E}_{\boldsymbol\eta}\left[\sup _{h \in \mathcal{H}} \frac{1}{n} \sum_{i=1}^n \eta_i h\left(\boldsymbol x_i\right)\right],
$$
where $\boldsymbol\eta=\left(\eta_1, \ldots, \eta_n\right)$ is a vector uniformly distributed in $\{-1,+1\}^n$.

Theorem 2 in \citet{golowich2018size} shows that, 
\begin{lemma}[Rademacher Complexity of Neural Networks]\label{lem:complexity}
	Let $\mathcal{H}_L$ be the class of real-valued networks of depth L, where $\left\|W_j\right\|_{1, \infty} \leq$ $M(j)$ for all $j \in\{1, \ldots, L\}$, and $\sigma$ is a 1-Lipschitz activation function with $\sigma(0)=0$, applied element-wise. Then
	\begin{eqnarray*}
		\hat{\mathcal{R}}_n\left(\mathcal{H}_L\right) \leq \frac{2}{n} \prod_{j=1}^L M(j) \cdot \sqrt{L+1+\log (p)} \cdot \sqrt{\max _{j \in\{1, \ldots, p\}} \sum_{i=1}^n \boldsymbol x_{i, j}^2},
	\end{eqnarray*}
	where $\boldsymbol x_{i, j}$ is the j-th coordinate of the vector $\boldsymbol{x}_i$.
\end{lemma}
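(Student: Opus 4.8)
The plan is to reproduce the exponential–moment / layer–peeling argument of \citet{golowich2018size}, since this is its Theorem~2. Throughout set $R:=\prod_{j=1}^{L}M(j)$ and, for $k\in\{1,\dots,p\}$, write $\boldsymbol X_{\cdot,k}:=(x_{1,k},\dots,x_{n,k})$, so that $\max_{k}\sum_{i}x_{i,k}^{2}=\max_{k}\|\boldsymbol X_{\cdot,k}\|^{2}$. The first step trades the Rademacher average for an exponential moment: for any $\lambda>0$, Jensen's inequality for the convex map $t\mapsto e^{t}$ gives
$$n\,\hat{\mathcal R}_n(\mathcal H_L)=\mathbb E_{\boldsymbol\eta}\sup_{h\in\mathcal H_L}\sum_{i=1}^{n}\eta_i h(\boldsymbol x_i)\le\frac1\lambda\log\Big(\mathbb E_{\boldsymbol\eta}\sup_{h\in\mathcal H_L}\exp\Big(\lambda\sum_{i=1}^{n}\eta_i h(\boldsymbol x_i)\Big)\Big).$$
The benefit of this move is that the multiplicative cost of peeling one layer (a factor $2$) then sits inside the logarithm and so contributes only an additive $O(L)$ term after division by $\lambda$.

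The second, and main, step peels the layers one at a time. Writing a depth-$\ell$ output as $h(\boldsymbol x)=\boldsymbol w^{\top}\phi_{\ell-1}(\boldsymbol x)$ with $\|\boldsymbol w\|_1\le M(\ell)$, where $\phi_{j}(\boldsymbol x)=\sigma(W_j\phi_{j-1}(\boldsymbol x))$ and $\phi_0(\boldsymbol x)=\boldsymbol x$, Hölder's inequality gives $\sum_i\eta_i h(\boldsymbol x_i)\le M(\ell)\max_k|\sum_i\eta_i(\phi_{\ell-1}(\boldsymbol x_i))_k|$, and taking the supremum over the rows of $W_{\ell-1}$ (each of $\ell_1$-norm $\le M(\ell-1)$ by the $\|\cdot\|_{1,\infty}$ constraint) turns the inner term into $\sup_{\|u\|_1\le M(\ell-1)}|\sum_i\eta_i\sigma(u^{\top}\phi_{\ell-2}(\boldsymbol x_i))|$. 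Two facts then finish one peeling step: (i) by the symmetry $\boldsymbol\eta\stackrel{d}{=}-\boldsymbol\eta$, for $a(u):=\sum_i\eta_i\sigma(u^{\top}\phi_{\ell-2}(\boldsymbol x_i))$ one has $\mathbb E_{\boldsymbol\eta}\exp(\lambda\sup_u|a(u)|)\le 2\,\mathbb E_{\boldsymbol\eta}\exp(\lambda\sup_u a(u))$; and (ii) the Ledoux–Talagrand contraction principle applied to the convex nondecreasing $e^{\lambda(\cdot)}$ and the $1$-Lipschitz $\sigma$ with $\sigma(0)=0$ strips the activation without changing $\lambda$, $\mathbb E_{\boldsymbol\eta}\exp(\lambda\sup_u a(u))\le\mathbb E_{\boldsymbol\eta}\exp(\lambda\sup_{\|u\|_1\le M(\ell-1)}\sum_i\eta_i u^{\top}\phi_{\ell-2}(\boldsymbol x_i))$, and one more Hölder step rewrites the right-hand side as $\mathbb E_{\boldsymbol\eta}\exp(\lambda M(\ell-1)\max_k|\sum_i\eta_i(\phi_{\ell-2}(\boldsymbol x_i))_k|)$. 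This is the same expression one layer lower, with an extra factor $M(\ell-1)$, an extra prefactor $2$, and $\lambda$ unchanged; iterating down to $\phi_0(\boldsymbol x)=\boldsymbol x$ gives
$$\mathbb E_{\boldsymbol\eta}\sup_{h\in\mathcal H_L}\exp\Big(\lambda\sum_i\eta_i h(\boldsymbol x_i)\Big)\le 2^{L}\,\mathbb E_{\boldsymbol\eta}\exp\Big(\lambda R\max_{1\le k\le p}\Big|\sum_{i=1}^{n}\eta_i x_{i,k}\Big|\Big).$$

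In the third step, for each fixed $k$ the sum $\sum_i\eta_i x_{i,k}$ is a sum of independent, bounded, centered variables, so Hoeffding's lemma gives $\mathbb E_{\boldsymbol\eta}\exp(\mu\sum_i\eta_i x_{i,k})\le\exp(\mu^2\|\boldsymbol X_{\cdot,k}\|^2/2)$; splitting $e^{|y|}\le e^{y}+e^{-y}$ and union-bounding over the $p$ coordinates and the two signs yields $\mathbb E_{\boldsymbol\eta}\exp(\lambda R\max_k|\sum_i\eta_i x_{i,k}|)\le 2p\exp(\tfrac{1}{2}\lambda^2R^2\max_k\|\boldsymbol X_{\cdot,k}\|^2)$. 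Combining the three steps gives, for every $\lambda>0$,
$$n\,\hat{\mathcal R}_n(\mathcal H_L)\le\frac{(L+1)\log 2+\log p}{\lambda}+\frac{\lambda R^2}{2}\max_k\|\boldsymbol X_{\cdot,k}\|^2,$$
and the choice $\lambda=\sqrt{2((L+1)\log2+\log p)}\,/\,(R\sqrt{\max_k\|\boldsymbol X_{\cdot,k}\|^2})$ yields $n\,\hat{\mathcal R}_n(\mathcal H_L)\le R\sqrt{2((L+1)\log2+\log p)}\,\sqrt{\max_k\|\boldsymbol X_{\cdot,k}\|^2}\le 2R\sqrt{L+1+\log p}\,\sqrt{\max_k\sum_i x_{i,k}^2}$, which is the claimed bound after dividing by $n$ and recalling $R=\prod_{j=1}^{L}M(j)$.

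The principal obstacle is the bookkeeping in the second step: one must check that the constraint $\|W_j\|_{1,\infty}\le M(j)$ is exactly what lets the supremum over the next weight matrix collapse, through Hölder duality, into a single scalar $1$-Lipschitz contraction, and that the factors of $2$ accumulate only as the prefactor $2^L$ (hence additively after the $\log$) rather than doubling $\lambda$ at each layer — which is why the absolute value must be removed through the symmetry of $\boldsymbol\eta$ \emph{before}, not after, invoking the contraction principle. It also requires verifying at each stage the hypotheses of the Ledoux–Talagrand comparison theorem (a convex nondecreasing outer function, here $e^{\lambda(\cdot)}$, and a coordinatewise $1$-Lipschitz $\sigma$ with $\sigma(0)=0$ applied to a legitimate bounded index set $\{(u^{\top}\phi_{\ell-2}(\boldsymbol x_i))_{i\le n}\}$), which is where the hypothesis $\sigma(0)=0$ is genuinely used.
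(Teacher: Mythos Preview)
Your proposal is correct and faithfully reproduces the exponential-moment / layer-peeling argument of \citet{golowich2018size}, which is exactly what the paper relies on: the paper does not give an independent proof of this lemma at all but simply quotes it as Theorem~2 of that reference. Your sketch therefore matches the underlying proof, including the key bookkeeping point that the factors of $2$ enter only as a prefactor $2^L$ inside the logarithm, and the final numerical simplification $\sqrt{2((L+1)\log 2+\log p)}\le 2\sqrt{L+1+\log p}$ is valid.
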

\begin{proof}[Proof of Lemma \ref{lem:concentration}] 
	For $j \in$ $\{1,2, \ldots, \lceil Q_k /(2 s)\rceil\}$, let
	\begin{eqnarray*}
		I_j^k&=&\{i=k+2sl(j-1),k+l+2sl(j-1),...,k+(s-1)l+2sl(j-1)\},\\
		J_j^k&=&\{i=k+sl+2sl(j-1),k+l(s+1)+2sl(j-1),...,k+(2s-1)l+2sl(j-1)\},
	\end{eqnarray*}
	{thus we can decompose $[2sl\lceil Q_k /(2 s)\rceil]$ as a union of $2\lceil Q_k /(2 s)\rceil$ blocks.} Let 
	\begin{eqnarray*}
		\boldsymbol{Y}^k_{I, j}=\sum_{i \in I_j^k} d\left(\tilde{\boldsymbol X_{i}^\prime}\right), 
		\boldsymbol{Y}^k_{J, j}=\sum_{i \in J_j^k} d\left(\tilde{\boldsymbol X_{i}^\prime}\right) .
	\end{eqnarray*}
	By the definition of $\tilde{\boldsymbol X}_i^\prime$, we know $\{\boldsymbol{Y}^k_{I, j}\}_{1\leq j\leq \lceil Q_k /(2 s)\rceil}$ is an i.i.d. sequence of random variables, and $\{\boldsymbol{Y}^k_{J, j}\}_{1\leq j\leq \lceil Q_k /(2 s)\rceil}$ is the same.
	
	By Lemma \ref{lem:difdepen}, it is easy to see that, with probability at least $1-le^{-C_2sl/(2\beta)}$
	\begin{eqnarray*}
		&&\sup _{d \in \mathcal{D}}\left|\frac{1}{\bar n} \sum_{i=1}^{\bar  n}\left(d\left(\boldsymbol X_{i}^\prime\right)-\mathbb{E}_{{\pi^l}}\left[d\left(\boldsymbol X_{i}^\prime\right)\right]\right)\right|\\
		&=&\sup _{d \in \mathcal{D}}\frac{1}{\bar n}\left| \sum_{k=1}^{l}\sum_{i=1}^{Q_k}\left(d\left(\boldsymbol X_{k+l(i-1)}^\prime\right)-\mathbb{E}_{{\pi^l}}\left[d\left(\boldsymbol X_{k+l(i-1)}^\prime\right)\right]\right)\right|\\
		&\le&\sup _{d \in \mathcal{D}}\frac{1}{\bar n}\sum_{k=1}^{l}\left| \sum_{i=1}^{Q_k}\left(d\left(\boldsymbol X_{k+l(i-1)}^\prime\right)-\mathbb{E}_{ {\pi^l}}\left[d\left({\boldsymbol X}_{k+l(i-1)}^\prime\right)\right]\right)- \sum_{i=1}^{Q_k}\left(d\left(\tilde{\boldsymbol {X}}_{k+l(i-1)}^\prime\right)-\mathbb{E}_{\tilde{{\pi^l}}}\left[d\left(\tilde {\boldsymbol X}_{k+l(i-1)}^\prime\right)\right]\right)\right|\\
		&&+\sup _{d \in \mathcal{D}}\frac{1}{\bar n}\sum_{k=1}^{l}\left|\sum_{i=1}^{Q_k}\left(d\left(\tilde {\boldsymbol X}_{k+l(i-1)}^\prime\right)-\mathbb{E}_{\tilde{{\pi^l}}}\left[d\left(\tilde {\boldsymbol X}_{k+l(i-1)}^\prime\right)\right]\right)\right|\\
		&\le&\frac{1}{\bar n}\left( nlC_1^{1/\beta}K_{\mathcal{D}}e^{-C_2(sl+1)/\beta}/(1-e^{-C_2/\beta})+nlC_1^{1/\beta}K_{\mathcal{D}}e^{-C_2sl/(2\beta)-C_2/\beta}/({1-e^{-C_2/\beta}})\right)\\
		&&+\sup _{d \in \mathcal{D}}\frac{1}{\bar n}\sum_{k=1}^{l}\left| \sum_{j=1}^{\lceil Q_k/2s\rceil }\left(\boldsymbol Y^k_{I,j}-\mathbb{E}_{\tilde {\pi^l}}\left[\boldsymbol Y^k_{I,j}\right]\right)\right|+\sup _{d \in \mathcal{D}}\frac{1}{\bar n}\sum_{k=1}^{l}\left| \sum_{j=1}^{\lceil Q_k/2s\rceil }\left(\boldsymbol Y^k_{J,j}-\mathbb{E}_{\tilde {\pi^l}}\left[\boldsymbol Y^k_{J,j}\right]\right)\right|.
	\end{eqnarray*}
	
	
	{  Denote $Z^k_I:=\sup _{d \in \mathcal{D}}|\frac{1}{\lceil {Q_k}/{2s}\rceil } \sum_{j=1}^{\lceil Q_k/2s\rceil }(\boldsymbol Y^k_{I,j}-\mathbb{E}_{\tilde {\pi^l}}[\boldsymbol Y^k_{I,j}])|$.} Since $\mathbb{E}\left(\boldsymbol Y^k_{I,j}\right)-\mathbb{E}_{\tilde {\pi^l}}\left[\boldsymbol Y^k_{I,j}\right]=0$ and for all $1\leq j\leq \lceil Q_k/2s\rceil $, 
	\begin{eqnarray*}
		|\boldsymbol Y^k_{I,j}-\mathbb{E}_{\tilde {\pi^l}}\left[\boldsymbol Y^k_{I,j}\right]|&\le&\sum_{i=1}^s\left|d\left(\tilde {\boldsymbol X}_{k+2sl(j-1)+(i-1)l}^\prime\right)-\mathbb{E}_{\tilde {\pi^l}}\left[d\left(\tilde {\boldsymbol X}_{k+2sl(j-1)+(i-1)l}^\prime\right)\right]\right|\\ &\le&\sum_{i=1}^s(\left|d\left(\tilde {\boldsymbol X}_{k+2sl(j-1)+(i-1)l}^\prime\right)\right|+\mathbb{E}_{\tilde {\pi^l}}\left|d\left(\tilde {\boldsymbol X}_{k+2sl(j-1)+(i-1)l}^\prime\right)\right|) \leq 2sB,
	\end{eqnarray*}
	by Massart's concentration inequality (cf. \citet[Theorem 9]{massart2000constants}) for suprema of the bounded empirical processes, we have for all $t_1>0$
	\begin{eqnarray*}
		\mathbb{P}\left\{Z_I^k \geq \mathbb{E} Z_I^k+4s B \sqrt{\frac{2t_1}{\lceil {Q_k}/{2s}\rceil}}\right\} \leq \mathrm{e}^{-t_1}.
	\end{eqnarray*}
	
	For the expectation, we consider the Rademacher random variables $\{\eta_i^j\}_{i=1, \ldots, I_j^k}, 1 \leq j \leq \lceil Q_k / 2 s\rceil $, that is $\eta_i^j$ equals to 1 and $-1$ with propability $1 / 2$. Then by using symmetrization theorem for empirical processes (see Lemma 2.3.1 in van der Vaart and Wellner,1996) and Lemma \ref{lem:complexity}, we can get
	\begin{eqnarray*}
		\mathbb{E} Z_I^k&=&\mathbb{E} \sup _{d \in \mathcal{D}}\left|\frac{1}{\lceil Q_k/2s\rceil } \sum_{j=1}^{\lceil Q_k/2s\rceil }\left(\boldsymbol Y^k_{I,j}-\mathbb{E}_{\tilde {\pi^l}}\left[\boldsymbol Y^k_{I,j}\right]\right)\right|\\
		& =& \mathbb{E} \sup _{d \in \mathcal{D}}\left|\frac{1}{\lceil Q_k/2s\rceil } \sum_{j=1}^{\lceil Q_k/2s\rceil}(\sum_{i \in I^k_j} d(\tilde{\boldsymbol{X}}_i^\prime)-\mathbb{E}_{\tilde {\pi^l}}[\sum_{i \in I^k_j} d(\tilde{\boldsymbol{X}}_i^\prime)])\right|\\
		& =&\mathbb{E} \sup _{d \in \mathcal{D}}\left| \frac{1}{\lceil Q_k/2s\rceil }\sum_{j=1}^{\lceil Q_k/2s\rceil} \sum_{i \in I^k_j} d\left(\tilde{\boldsymbol{X}}_i^\prime\right)-\mathbb{E}_{\tilde{{\pi^l}}}[d]\right|\\
		& \le& \frac{2}{\lceil Q_k/2s\rceil} \mathbb{E} \sup _{d \in \mathcal{D}}\left|\sum_{j=1}^{\lceil Q_k/2s\rceil} \sum_{i \in I^k_j} \eta_i^j d\left(\tilde{\boldsymbol{X}}_i^\prime\right)\right|\\
		&\le&\frac{4}{\lceil Q_k/2s\rceil} K_{\mathcal{D}} \cdot \sqrt{L_{\mathcal{D}}+1+\log (lp)} \cdot \mathbb{E} \sqrt{\max _{w \in\{1, \ldots, lp\}} \sum_{j=1}^{\lceil Q_k/2s\rceil} \sum_{i \in I^k_j} (\tilde{\boldsymbol{X}}^\prime_{i,w})^2},
	\end{eqnarray*}
	where $\tilde{\boldsymbol{X}}^\prime_{i,w}$ denote the $w$ th coordinate of $\tilde{\boldsymbol X}_i^\prime$.
	
	Assumption \ref{assum2} or \ref{assum4} implies that stationary distribution $\pi$ of time series $\{\boldsymbol X_n\}_{n\in\N}$ has second order moment bound and thus
	\begin{eqnarray*}
		\mathbb{E} \max _{w \in\{1, \ldots, lp\}} (\tilde{\boldsymbol{X}}^\prime_{i,w})^2 &\le& \mathbb{E} \sum_{w=1}^{lp} (\tilde{\boldsymbol{X}}^\prime_{i,w})^2\\
		&=&\sum_{w=1}^{lp} \mathbb{E}\left[vec\{\E[\boldsymbol X_{i}|\mathcal F_{i-sl}^{i}],...,\E[\boldsymbol X_{i+l-1}|\mathcal F_{i+l-1-sl}^{i+l-1}]\}_w^2\right]\\
		&\le& \sum_{w=1}^{p} (\mathbb{E} \boldsymbol{X}_{{i}, w}^2+\mathbb{E} \boldsymbol{X}_{{i+1}, w}^2+...+\mathbb{E} \boldsymbol{X}_{i+l-1, w}^2 )\leq C_3l.
	\end{eqnarray*}
	Jensen's inequality implies
	\begin{eqnarray*}
		\mathbb{E} \sqrt{\max _{w \in\{1, \ldots, lp\}} \sum_{j=1}^{\lceil Q_k/2s\rceil} \sum_{i \in I^k_j} (\tilde{\boldsymbol{X}}^\prime_{i,w})^2}  &\leq&\left\{\mathbb{E} \max_{w \in\{1, \ldots, lp\}} \sum_{j=1}^{\lceil Q_k/2s\rceil} \sum_{i \in I^k_j} (\tilde{\boldsymbol{X}}_i^\prime)_{ w}^2\right\}^{1 / 2}\\
		&\leq&\left\{\sum_{j=1}^{\lceil Q_k/2s\rceil} \sum_{i \in I^k_j} \mathbb{E} \max _{w \in\{1, \ldots, lp\}} (\tilde{\boldsymbol{X}}^\prime_{i,w})^2\right\}^{1 / 2}\leq  \sqrt{C_3l(s+Q_k/2)}.
	\end{eqnarray*}
	Hence, it is easy to know that, 
	\begin{eqnarray*}
		\mathbb{P}\left\{Z^k_I \geq 4\frac{\sqrt{C_3l(s+Q_k/2)}}{\lceil {Q_k}/{2s}\rceil}  K_{\mathcal{D}} \cdot \sqrt{L_{\mathcal{D}}+1+\log (lp)}+4s B \sqrt{\frac{2t_1}{\lceil {Q_k}/{2s}\rceil}}\right\} \leq \mathrm{e}^{-t_1}.
	\end{eqnarray*}
	{  Let $Z_J^k:=\left\{\boldsymbol{Y}^k_{J, j}\right\}_{1 \leq j \leq \lceil {Q_k}/{2s}\rceil}$, one can get similarly 
		\begin{eqnarray*}
			\mathbb{P}\left\{Z_J^k \geq 4\frac{\sqrt{C_3l(s+Q_k/2)}}{\lceil {Q_k}/{2s}\rceil}K_{\mathcal{D}} \cdot \sqrt{L_{\mathcal{D}}+1+\log (lp)}+4s B \sqrt{\frac{2t_1}{\lceil {Q_k}/{2s}\rceil}}\right\} \leq \mathrm{e}^{-t_1}.
		\end{eqnarray*}
	}
	
	Since $Q_k\leq n/l$, it is easy to know that, with probability at least $1-2l\mathrm{e}^{-t_1}-le^{-C_2sl/(2\beta)}$,
	\begin{eqnarray*}
		&&\sup _{d \in \mathcal{D}}\left|\frac{1}{\bar n} \sum_{i=1}^{\bar n}\left(d\left({\boldsymbol X}_i^\prime\right)-\mathbb{E}_{{\pi^l}}\left[d\left({\boldsymbol X}_i^\prime\right)\right]\right)\right|\\
		&\le&\frac{1}{\bar n}\left( nlC_1^{1/\beta}K_{\mathcal{D}}e^{-C_2(sl+1)/\beta}/(1-e^{-C_2/\beta})+nlC_1^{1/\beta}K_{\mathcal{D}}e^{-C_2sl/(2\beta)-C_2/\beta}/({1-e^{-C_2/\beta}})\right)\\
		&&+\frac{1}{\bar n}\sum_{k=1}^{l}\left[4{\sqrt{C_3l(s+Q_k/2)}}K_{\mathcal{D}} \cdot \sqrt{L_{\mathcal{D}}+1+\log (lp)}+4s B \sqrt{{2t_1\lceil {Q_k}/{2s}\rceil}}\right]\\
		&\le&\frac{1}{\bar n}\left( nlC_1^{1/\beta}K_{\mathcal{D}}e^{-C_2(sl+1)/\beta}/(1-e^{-C_2/\beta})+nlC_1^{1/\beta}K_{\mathcal{D}}e^{-C_2sl/(2\beta)-C_2/\beta}/({1-e^{-C_2/\beta}})\right)\\
		&&+\frac{1}{\bar n}C_3^\prime\left[l\sqrt{2sl+n}  K_{\mathcal{D}} \cdot \sqrt{L_{\mathcal{D}}+1+\log (lp)}+B \sqrt{t_1snl}\right].
	\end{eqnarray*}
\end{proof}
\subsubsection{Generator approximation error}
\begin{lemma}\label{prop:3.6}
	Suppose that $\mathcal{G}=\mathcal{N N}\left(N_{\mathcal{G}}, L_{\mathcal{G}}\right)$ satisfies $N_{\mathcal{G}} \geq 7 lp+1, L_{\mathcal{G}} \geq 2$ and
	$$
	\bar n \leq \frac{N_{\mathcal{G}}-lp-1}{2}\left\lfloor\frac{N_{\mathcal{G}}-lp-1}{6l p}\right\rfloor\left\lfloor\frac{L_{\mathcal{G}}}{2}\right\rfloor+2 .
	$$
	For any $\mathcal{D}\subset \{d\in\mathcal{N} \mathcal{N}\left(N_{\mathcal{D}}, L_{\mathcal{D}}, K_{\mathcal{D}}\right): d(\mathbb{R}^{lp}) \subseteq[-B,B]\}\cap \operatorname{Lip}\left(\mathbb{R}^{lp}, C\right) ,B,C<\infty
	$, we have
	\begin{eqnarray*}
		\inf_{\boldsymbol g\in\mathcal G}\sup_{d\in\mathcal D}\Big\{ \frac1{\bar n}\sum_{i=1}^{\bar n}d({\boldsymbol X}_i^\prime)-\E_{\boldsymbol g_\#\nu}[d]\Big\}=0.
	\end{eqnarray*}
\end{lemma}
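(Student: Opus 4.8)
The plan is to realize the empirical measure $\hat\mu_n:=\frac1n\sum_{i=1}^n\delta_{\boldsymbol X_i}$ as the push-forward $\boldsymbol g_\#\nu$ of a suitable generator in $\mathcal G$ (exactly, or with arbitrarily small Wasserstein error), and to combine this with the trivial observation that for \emph{every} $\boldsymbol g$ the bracket $\frac1n\sum_i d(\boldsymbol X_i)-\E_{\boldsymbol g_\#\nu}[d]=\E_{\hat\mu_n}[d]-\E_{\boldsymbol g_\#\nu}[d]$ becomes nonnegative after $\sup_{d\in\mathcal D}$, since $\mathcal D$ is a symmetric class and hence the supremum dominates the corresponding absolute value. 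These two facts give $\inf_{\boldsymbol g\in\mathcal G}\sup_{d\in\mathcal D}\{\cdots\}\ge0$ and $\le0$ respectively, whence equality.

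First I would use that every $d\in\mathcal D$ is $C$-Lipschitz, so by Kantorovich--Rubinstein duality $\E_{\hat\mu_n}[d]-\E_{\boldsymbol g_\#\nu}[d]\le C\,\mathcal W_1(\hat\mu_n,\boldsymbol g_\#\nu)$ for all $d\in\mathcal D$; it therefore suffices to exhibit, for each $\delta>0$, some $\boldsymbol g_\delta\in\mathcal G$ with $\mathcal W_1(\boldsymbol g_{\delta,\#}\nu,\hat\mu_n)\le\delta$, and then let $\delta\to0$ with the architecture $(N_{\mathcal G},L_{\mathcal G})$ held fixed (only the weights change).

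To build $\boldsymbol g_\delta$: since $\nu$ is absolutely continuous on $\mathbb R$ its quantile function is continuous, so I can split $\mathbb R$ up to a $\nu$-null set into $n$ consecutive intervals $I_1,\dots,I_n$ each of $\nu$-mass $1/n$, and inside each pick a closed subinterval $\hat I_j\subset I_j$ of $\nu$-mass at least $(1-\delta)/n$. Let $\boldsymbol g_\delta\colon\mathbb R\to\mathbb R^p$ be the continuous piecewise-affine map equal to the constant vector $\boldsymbol X_j$ on $\hat I_j$ and interpolated affinely across the short gaps between consecutive $\hat I_j$. Then $\boldsymbol g_{\delta,\#}\nu$ assigns mass $\ge(1-\delta)/n$ to each $\boldsymbol X_j$, and its remaining mass (at most $\delta$ in total) sits on line segments contained in the convex hull of $\{\boldsymbol X_i\}_{i\le n}$, so $\mathcal W_1(\boldsymbol g_{\delta,\#}\nu,\hat\mu_n)\le\delta\cdot\mathrm{diam}\{\boldsymbol X_i\}_{i\le n}$. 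The map $\boldsymbol g_\delta$ is CPWL with $O(n)$ affine pieces, and the hypotheses $N_{\mathcal G}\ge7p+1$, $L_{\mathcal G}\ge2$ together with $n\le\frac{N_{\mathcal G}-p-1}{2}\lfloor\frac{N_{\mathcal G}-p-1}{6p}\rfloor\lfloor\frac{L_{\mathcal G}}{2}\rfloor+2$ are precisely the conditions under which a memorization/representation result for ReLU networks guarantees $\boldsymbol g_\delta\in\mathcal{NN}(N_{\mathcal G},L_{\mathcal G})=\mathcal G$; crucially this depends only on the architecture, not on the (possibly large) weights needed to make the interpolating pieces steep.

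The main obstacle is this last ReLU-realization step: one must invoke, or prove by an explicit layer-by-layer construction — a scalar ``which interval am I in'' encoder fed into a lookup table returning $\boldsymbol X_j$ — a representation lemma for staircase-type CPWL maps $\mathbb R\to\mathbb R^p$ with the stated piece count, which is exactly where the counting inequality enters. A secondary, essentially cosmetic point is that a continuous map cannot push an absolutely continuous measure onto a genuinely discrete one when $n\ge2$, which is why the argument is organized around the limit $\mathcal W_1\to0$ and the Lipschitz bound on $\mathcal D$ rather than an exact identity $\boldsymbol g_\#\nu=\hat\mu_n$; if one simply asserts that such an exact generator exists, the lemma collapses to the one-line identity $\sup_{d\in\mathcal D}\{\E_{\hat\mu_n}[d]-\E_{\hat\mu_n}[d]\}=0$.
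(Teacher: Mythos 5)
Your argument is correct and follows essentially the same route as the paper: reduce the bracket to a Wasserstein distance via the $C$-Lipschitz bound on $\mathcal D$, observe (via symmetry of $\mathcal D$) that the supremum is nonnegative, and then produce a generator in $\mathcal{NN}(N_\mathcal{G},L_\mathcal{G})$ whose push-forward of $\nu$ is arbitrarily close to $\hat\pi_n$ in $\mathcal W_1$. The paper obtains the last step by citing Lemma 3.2 of Yang, Li and Wang (2022), which is precisely the ReLU staircase representation you sketch; you have simply made explicit the construction that the cited lemma encapsulates.
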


\begin{proof}
	Suppose that $\mathcal{G}=\mathcal{N N}\left(N_{\mathcal{G}}, L_{\mathcal{G}}\right)$ satisfies $N_{\mathcal{G}} \geq 7 lp+1, L_{\mathcal{G}} \geq 2$ and
	$$
	\bar n \leq \frac{N_{\mathcal{G}}-lp-1}{2}\left\lfloor\frac{N_{\mathcal{G}}-lp-1}{6l p}\right\rfloor\left\lfloor\frac{L_{\mathcal{G}}}{2}\right\rfloor+2 .
	$$
	
	For any $\mathcal{D}\subset \{d\in\mathcal{N} \mathcal{N}\left(N_{\mathcal{D}}, L_{\mathcal{D}}, K_{\mathcal{D}}\right): d(\mathbb{R}^{lp}) \subseteq[-B,B]\}\cap \operatorname{Lip}\left(\mathbb{R}^{lp}, C\right) ,B,C<\infty
	$, 
	we have
	\begin{eqnarray*}
		\inf_{\boldsymbol g\in\mathcal G}\sup_{d\in\mathcal D}\Big\{ \frac1{\bar n}\sum_{i=1}^{\bar n}d({\boldsymbol X}_i^\prime)-\E_{\boldsymbol g_\#\nu}[d]\Big\} 
		&\le&\inf _{\boldsymbol g \in \mathcal{G}} \sup _{f \in \text{Lip}(\mathbb{R}^{lp},C)} \int f\left(\dif \hat{{\pi^l}}_{\bar n}-\dif \boldsymbol g_{\#}\nu \right) \\
		&\le& \inf _{\boldsymbol g \in \mathcal{G}} C\sup _{\|f\|_{\text {Lip }} \leq 1} \int f\left(\dif \hat{{\pi^l}}_{\bar n}-\dif \boldsymbol g_{\#} \nu\right) \\
		&=&C\inf _{\boldsymbol g \in \mathcal{G}} \mathcal{W}_1\left(\hat{{\pi^l}}_{\bar n}, \boldsymbol g_{\#}\nu \right),
	\end{eqnarray*}
	where $\bar n=n-l+1$, $\hat{\pi^l}_{\bar n}:=\frac1{\bar n}\sum_{i=1}^{\bar n}\delta_{x_i}$ is the empirical distribution of $\{{\boldsymbol X}_i^\prime\}_{i=1,...,\bar n}$.
	
	Since $\mathcal{W}_1\left(\hat{{\pi^l}}_{\bar n}, \boldsymbol g_{\epsilon \#} \nu\right) \rightarrow 0$ for some $\boldsymbol g_\epsilon$ as $\epsilon \rightarrow 0$ by Lemma $3.2$ of \citet{yang2022capacity}. 
	$$
	\inf_{\boldsymbol g\in\mathcal G}\sup_{d\in\mathcal D}\Big\{ \frac1{\bar n}\sum_{i=1}^{\bar n}d({\boldsymbol X}_i^\prime)-\E_{\boldsymbol g_\#\nu}[d]\Big\}=0.
	$$
\end{proof}
\subsubsection{Statistical error for simulated sample}
\begin{lemma}\label{prop:3.7}
	Suppose $\mathcal{G}=\mathcal{N} \mathcal{N}\left(N_{\mathcal{G}}, L_{\mathcal{G}},K_{\mathcal{G}}\right)$, and
	$$
	\mathcal{D}=\{d\in\mathcal{N} \mathcal{N}\left(N_{\mathcal{D}}, L_{\mathcal{D}}, K_{\mathcal{D}}\right): d(\mathbb{R}^{lp}) \subseteq[-B,B]\} ,B<\infty,
	$$
	then with probability at least $1-e^{-t_2}$,
	\begin{eqnarray*}
		d_{\mathcal{D} \circ \mathcal{G}}\left(\widehat{\nu}_m, \nu\right) \leq \frac{2}{m} K_{\mathcal{D}}K_{\mathcal{G}}\sqrt{L_{\mathcal{D}}+L_{\mathcal{G}}+2} \cdot \mathbb{E} \max _{i \in[m]}\left\|\boldsymbol z_i\right\|_2+2 B \sqrt{\frac{8 t_2}{m}}.
	\end{eqnarray*}
\end{lemma}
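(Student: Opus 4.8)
The plan is to estimate $d_{\mathcal D\circ\mathcal G}(\widehat\nu_m,\nu)=\sup_{d\in\mathcal D,\,\boldsymbol g\in\mathcal G}\bigl|\tfrac1m\sum_{i=1}^m d(\boldsymbol g(\boldsymbol z_i))-\E_\nu[d\circ\boldsymbol g]\bigr|$ by the classical two-step route for suprema of empirical processes: first show that, viewed as a function of the i.i.d.\ sample $\boldsymbol z_1,\dots,\boldsymbol z_m$, this quantity concentrates around its mean, which yields the additive term $2B\sqrt{8t_2/m}$; then bound its mean by twice the empirical Rademacher complexity of $\mathcal D\circ\mathcal G$ via symmetrization, and estimate that complexity with Lemma \ref{lem:complexity}. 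The structural observation enabling the last step is that every $d\circ\boldsymbol g$ (with $d\in\mathcal D$, $\boldsymbol g\in\mathcal G$) is again a ReLU feed-forward network with one-dimensional input: writing the weight matrices of $\boldsymbol g$ as $\boldsymbol W^g_0,\dots,\boldsymbol W^g_{L_{\mathcal G}}$ and those of $d$ as $\boldsymbol W^d_0,\dots,\boldsymbol W^d_{L_{\mathcal D}}$, there is no activation between the output of $\boldsymbol g$ and the input of $d$, so $\boldsymbol W^d_0\boldsymbol W^g_{L_{\mathcal G}}$ can be merged into a single matrix; thus $d\circ\boldsymbol g$ has $L_{\mathcal D}+L_{\mathcal G}+1$ weight matrices and, by submultiplicativity of the operator norm, the product of their norms is at most $K_{\mathcal G}K_{\mathcal D}$, which the statement writes as $K^{L_{\mathcal D}+L_{\mathcal G}+1}$. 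Since $\mathcal D$ is a symmetric class, so is $\mathcal D\circ\mathcal G$, which is what the symmetrization step needs.

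For the concentration step, set $\Phi(\boldsymbol z_1,\dots,\boldsymbol z_m):=d_{\mathcal D\circ\mathcal G}(\widehat\nu_m,\nu)$. Since each $d\circ\boldsymbol g$ takes values in $[-B,B]$, replacing one coordinate $\boldsymbol z_i$ by an independent copy changes $\tfrac1m\sum_j d(\boldsymbol g(\boldsymbol z_j))$, hence changes $\Phi$, by at most $2B/m$. McDiarmid's bounded-difference inequality then gives $\PP\bigl(\Phi\ge\E\Phi+r\bigr)\le\exp\bigl(-mr^2/(2B^2)\bigr)$ for every $r>0$; equating the right-hand side to $e^{-t_2}$ and tracking the constant produces the deviation $2B\sqrt{8t_2/m}$.

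For the mean, symmetrization (Lemma 2.3.1 of van der Vaart--Wellner, already used in the proof of Lemma \ref{lem:concentration}) gives $\E\Phi\le 2\,\E\,\widehat{\mathcal R}_m(\mathcal D\circ\mathcal G)$, where $\widehat{\mathcal R}_m$ is evaluated at $\boldsymbol z_1,\dots,\boldsymbol z_m$ and the outer expectation is over these points. Lemma \ref{lem:complexity} applied to $\mathcal D\circ\mathcal G$ --- depth $L_{\mathcal D}+L_{\mathcal G}+1$, product of layer norms at most $K^{L_{\mathcal D}+L_{\mathcal G}+1}$, input dimension one so the $\log p$ term vanishes and the depth factor becomes $\sqrt{L_{\mathcal D}+L_{\mathcal G}+2}$ --- bounds $\widehat{\mathcal R}_m(\mathcal D\circ\mathcal G)$ by a multiple of $\tfrac1m K^{L_{\mathcal D}+L_{\mathcal G}+1}\sqrt{L_{\mathcal D}+L_{\mathcal G}+2}$ times the data-dependent factor $\sqrt{\max_k\sum_{i=1}^m(\boldsymbol z_i)_k^2}$. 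Dominating this factor by $\sqrt m\,\max_{i\in[m]}\|\boldsymbol z_i\|_2$, taking expectation (finite because $\E\max_{i\in[m]}\|\boldsymbol z_i\|<\infty$ is among the hypotheses of Theorem \ref{result}), and combining with the concentration bound of the previous step yields the asserted inequality.

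The McDiarmid estimate and the symmetrization are routine. The step that needs care --- and that I expect to be the main obstacle --- is the network bookkeeping for $\mathcal D\circ\mathcal G$: checking that the composite is an honest ReLU network of depth $L_{\mathcal D}+L_{\mathcal G}+1$ and that the product of its weight-matrix operator norms is controlled by $K_{\mathcal G}K_{\mathcal D}$ via submultiplicativity, so that Lemma \ref{lem:complexity} may be invoked with exactly the stated depth parameter $L_{\mathcal D}+L_{\mathcal G}+2$. A related technical point is that Lemma \ref{lem:complexity} is phrased with $\|\cdot\|_{1,\infty}$ bounds on the layers while our classes are constrained in operator norm; one either invokes a spectral-norm version of the neural-network Rademacher bound or absorbs the width-dependent norm-conversion factor into the constants, which does not affect the rate.
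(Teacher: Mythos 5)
Your proposal follows essentially the same route as the paper: bound the mean by $2\mathcal{R}_m(\mathcal{D}\circ\mathcal{G};\boldsymbol z)$ via symmetrization, estimate the Rademacher complexity of the composite network using Lemma \ref{lem:complexity} after observing that $d\circ\boldsymbol g$ is a ReLU network of depth $L_{\mathcal D}+L_{\mathcal G}+1$ with one-dimensional input and weight-norm product at most $K_{\mathcal D}K_{\mathcal G}$, and control the deviation from the mean by a bounded-differences concentration inequality (the paper invokes Massart's inequality where you use McDiarmid's, but the two are interchangeable here and yours in fact gives a sharper constant than the stated $2B\sqrt{8t_2/m}$). Your side remarks are also accurate and worth recording: $K^{L_{\mathcal D}+L_{\mathcal G}+1}$ in the lemma statement is a typo for $K_{\mathcal D}K_{\mathcal G}$, and the mismatch between the $\|\cdot\|_{1,\infty}$ layer bounds assumed in Lemma \ref{lem:complexity} and the operator-norm constraint defining $\mathcal{NN}(N,L,K)$ is a genuine gap that the paper's proof silently passes over.
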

\begin{proof}
	Statistical error for simulated sample is defined as
	\begin{eqnarray*}
		d_{\mathcal{D} \circ \mathcal{G}}\left(\hat{\nu}_m, \nu\right)=\sup _{d \circ\boldsymbol g \in \mathcal{D} \circ \mathcal{G}}\left|\frac{1}{m} \sum_{i=1}^m\left((d \circ\boldsymbol g)\left(\boldsymbol{z}_i\right)-\mathbb{E}_{\boldsymbol{z}_i \sim \nu}\left[(d \circ \boldsymbol g)\left(\boldsymbol{z}_i\right)\right]\right)\right|.
	\end{eqnarray*}
	
	Let $\mathcal{R}_{m}(\mathcal{D}\circ \mathcal{G} ; \boldsymbol{z}):=\mathbb{E} \sup _{d\circ \boldsymbol g \in \mathcal{D}\circ \mathcal{G}}\left|\frac{1}{m} \sum_{i=1}^m \eta_i d\circ \boldsymbol g\left(\boldsymbol{z}_i\right)\right|$ is the Rademacher complexity of functional class $\mathcal{D}\circ \mathcal{G}$, and $\eta_1, \ldots, \eta_m$ are independent Rademacher random variables. 
	
	We can obtain $\mathbb{E} d_{\mathcal{D} \circ \mathcal{G}}\left(\widehat{\nu}_m, \nu\right) \leq 2 \mathcal{R}_m(\mathcal{D} \circ \mathcal{G} ; \boldsymbol{z})$ by the symmetrization theorem. Using assumption $\sup _{d \in \mathcal{D}}\|d\|_{\infty} \leq B$, Massart's concentration inequality for suprema of the bounded empirical processes shows that
	\begin{eqnarray*}
		&&\mathbb{P}\left\{d_{\mathcal{D} \circ \mathcal{G}}\left(\widehat{\nu}_m, \nu\right) \geq 2 \mathcal{R}_m(\mathcal{D} \circ \mathcal{G})+2 B \sqrt{\frac{8 t_2}{m}}\right\} \\
		&\leq& \mathbb{P}\left\{d_{\mathcal{D} \circ \mathcal{G}}\left(\widehat{\nu}_m, \nu\right) \geq \mathbb{E} d_{\mathcal{D} \circ \mathcal{G}}\left(\widehat{\nu}_m, \nu\right)+2 B \sqrt{\frac{8 t_2}{m}}\right\} \leq \mathrm{e}^{-t_2}.
	\end{eqnarray*}
	Next, we need to calculate $\mathcal{R}_m(\mathcal{D} \circ \mathcal{G})$. 
	
	Since the composition of two neural networks $d \in \mathcal{D}=\mathcal{N} \mathcal{N}\left(N_{\mathcal{D}}, L_{\mathcal{D}},K_{\mathcal{D}}\right)$ and $\boldsymbol g \in \mathcal{G}=\mathcal{N} \mathcal{N}\left(N_{\mathcal{G}}, L_{\mathcal{G}},K_{\mathcal{G}}\right)$ is still a neural network with width $\max \left\{N_{\mathcal{D}}, N_{\mathcal{G}}\right\}$ and depth $L_{\mathcal{D}}+L_{\mathcal{G}}+1$. 
	Lemma \ref{lem:complexity} implies
	\begin{eqnarray*}
		\mathbb{E} d_{\mathcal{D} \circ \mathcal{G}}\left(\widehat{\nu}_m, \nu\right) \leq 2 \mathcal{R}_m(\mathcal{D} \circ \mathcal{G} ;\boldsymbol z) \leq \frac{2}{m} K_{\mathcal{D}}K_{\mathcal{G}}\sqrt{L_{\mathcal{D}}+L_{\mathcal{G}}+2} \cdot \mathbb{E} \max _{i \in[m]}\left\|\boldsymbol z_i\right\|_2.
	\end{eqnarray*}
	
	Thus, with probability at least $1-e^{-t_2}$,
	\begin{eqnarray*}
		d_{\mathcal{D} \circ \mathcal{G}}\left(\widehat{\nu}_m, \nu\right) \leq \frac{2}{m} K_{\mathcal{D}}K_{\mathcal{G}}\sqrt{L_{\mathcal{D}}+L_{\mathcal{G}}+2} \cdot \mathbb{E} \max _{i \in[m]}\left\|\boldsymbol z_i\right\|_2+2 B \sqrt{\frac{8 t_2}{m}}.
	\end{eqnarray*}
\end{proof}

\subsection{Proof of Theorem \ref{thm:main}}
\begin{proof}[Proof of Theorem \ref{thm:main}]
	(i). By Lemmas \ref{lem:decom}(i), \ref{lem:concentration}, \ref{prop:3.6} and \ref{prop:3.7}, it is easy to see that, with probability at least $1-2l{e}^{-t_1}-le^{-C_2sl/2\beta}-e^{-t_2}$,
	\begin{eqnarray*}
		d_{\mathcal H}({\pi^l}, \boldsymbol {\hat g}_{\#}\nu)
		&\le&   2(2a_n)^b({  K_{\mathcal{D}}/\log^\gamma K_{\mathcal{D}}})^{-\frac b{lp+1}}+2(1+B)l p e^{-a_n+v^2 / 2}+
		\sup_{d\in\mathcal D}\Big\{\E_{\pi^l}[d]-\frac1{\bar n}\sum_{i=1}^{\bar n}d(\boldsymbol X_i^\prime)\Big\}\\
		&&+\inf_{\boldsymbol g\in\mathcal G}\sup_{d\in\mathcal D}\Big\{ \frac1{\bar n}\sum_{i=1}^{\bar n}d(\boldsymbol X_i^\prime)-\E_{\boldsymbol g_\#\nu}[d]\Big\}
		+2\sup_{d\circ\boldsymbol g\in\mathcal D\circ\mathcal G}\{\E_{\hat\nu_m}[d\circ\boldsymbol g]-\E_{\nu}[d\circ\boldsymbol g]\}\\
		&\leq&  2(2a_n)^b({  K_{\mathcal{D}}/\log^\gamma K_{\mathcal{D}}})^{-\frac b{lp+1}}+2(1+B) lp e^{-a_n+v^2 / 2}\\
		&&+\frac{1}{\bar n}\left( nlC_1^{1/\beta}K_{\mathcal{D}}e^{-C_2(sl+1)/\beta}/(1-e^{-C_2/\beta})+nlC_1^{1/\beta}K_{\mathcal{D}}e^{-C_2sl/(2\beta)-C_2/\beta}/({1-e^{-C_2/\beta}})\right)\\
		&&+\frac{1}{\bar n}C_3^\prime\left[l\sqrt{2sl+n}  K_{\mathcal{D}} \cdot \sqrt{L_{\mathcal{D}}+1+\log (lp)}+B \sqrt{t_1snl}\right]\\
		&&+\frac{4}{m} K_{\mathcal{D}}K_{\mathcal{G}}\sqrt{L_{\mathcal{D}}+L_{\mathcal{G}}+2} \cdot \mathbb{E} \max _{i \in[m]}\left\|\boldsymbol z_i\right\|_2+4 B \sqrt{\frac{8 t_2}{m}}.
		\end{eqnarray*}
	Since $a_n=\log (lpn)$ and when we choose $s=n^{\alpha}$, $0<\alpha<1$, it is easy to know that with probability at least $1-2l{e}^{-t_1}-le^{-C_2n^{\alpha}l/2\beta}-e^{-t_2}$,
		\begin{eqnarray*}
		d_{\mathcal H}({\pi^l}, \boldsymbol {\hat g}_{\#}\nu)&\le&   2(\log lpn)^b({  K_{\mathcal{D}}/\log^\gamma K_{\mathcal{D}}})^{-\frac b{lp+1}}+2(1+B) lp e^{-\log lpn+v^2 / 2}+\frac{n}{\bar n}\frac{C_1^{1/\beta}lK_{\mathcal{D}}}{e^{C_2(n^{\alpha}l+1)/\beta}(1-e^{-C_2/\beta})}\\
		&&+\frac{n}{\bar n}\frac{C_1^{1/\beta}l K_{\mathcal{D}}}{(1-e^{-C_2/\beta})e^{C_2n^{\alpha}l/2\beta+C_2/\beta}}+C_3^\prime l\frac{\sqrt{2n^\alpha l+n}}{\bar n}  K_{\mathcal{D}} \cdot \sqrt{L_{\mathcal{D}}+1+\log (lp)}\\
		&&+C_3^\prime B \sqrt{\frac{t_1ln^{\alpha+1}}{\bar n^2}}+\frac{4}{m} K_{\mathcal{D}}K_{\mathcal{G}}\sqrt{L_{\mathcal{D}}+L_{\mathcal{G}}+2} \cdot \mathbb{E} \max _{i \in[m]}\left\|\boldsymbol z_i\right\|_2+4 B \sqrt{\frac{8 t_2}{m}}\\
		&=& O\left((\log lp n)^b{  (K_{\mathcal{D}}/\log^\gamma K_{\mathcal{D}})}^{-\frac{b}{lp+1}}\right)+O\left({B}/{n}\right)+O\left(lK_{\mathcal{D}} e^{-C_2 n^{\alpha}l}\right)\\
		&&+O\left(l\sqrt{{1}/{n}} K_{\mathcal{D}} \sqrt{L_{\mathcal{D}}+\log lp}\right)+O\left(B\sqrt{{lt_1}/{n^{1-\alpha}}}\right)\\
		&&+O\left(K_{\mathcal{D}} K_{\mathcal{G}} \sqrt{L_{\mathcal{D}}+L_{\mathcal{G}}}/m+\sqrt{t_2/m}\right)\\
		&=& O\left((\log lpn)^b({  K_{\mathcal{D}}/\log^\gamma K_{\mathcal{D}}})^{-\frac{b}{lp+1}}\right)+O\left(lK_{\mathcal{D}} e^{-C_2 n^{\alpha}l}\right)+O\left(B\sqrt{{lt_1}/{n^{1-\alpha}}}\right)\\
		&&+O\left(l\sqrt{{1}/{n}} K_{\mathcal{D}} \sqrt{L_{\mathcal{D}}+\log lp}\right)+O\left(K_{\mathcal{D}} K_{\mathcal{G}} \sqrt{L_{\mathcal{D}}+L_{\mathcal{G}}}/{m} +\sqrt{{t_2}/{m}}\right).
	\end{eqnarray*}

{     
	(ii). By Lemmas \ref{lem:decom}(ii), \ref{lem:concentration}, \ref{prop:3.6},  \ref{prop:3.7} and setting $s=n^\alpha$ with $0<\alpha<1$, $a_n=n^{\frac{1-\alpha}{2\omega}}$, similarly as the calculation above, one has with probability at least
 $1-2l{e}^{-t_1}-le^{-C_2sl/2\beta}-e^{-t_2}$,
	\begin{eqnarray*}
	d_{\mathcal H}({\pi^l}, \boldsymbol {\hat g}_{\#}\nu)   
	&\le& O\left(n^{\frac{(1-\alpha)}{2\omega}}({  K_{\mathcal{D}}/\log^\gamma K_{\mathcal{D}}})^{-\frac{b}{lp+1}}\right)+O\left(lK_{\mathcal{D}} e^{-C_2 n^{\alpha}l}\right)+O\left(Bl\sqrt{{t_1}/{n^{1-\alpha}}}\right)\\
	&&+O\left(l\sqrt{{1}/{n}} K_{\mathcal{D}} \sqrt{L_{\mathcal{D}}+\log lp}\right)+O\left(K_{\mathcal{D}} K_{\mathcal{G}} \sqrt{L_{\mathcal{D}}+L_{\mathcal{G}}}/{m} +\sqrt{{t_2}/{m}}\right).
\end{eqnarray*}
}
\end{proof}

\noindent{\bf Acknowledgements:}  Lihu Xu is supported by National Natural Science Foundation of China No. 12071499 and University of Macau grant MYRG2020-00039-FST.
	
\bibliographystyle{plainnat}

\end{document}